\begin{document}

	%XZ: for the arXiv pre-print; to remove for the camera-ready version
	\begin{textblock*}{20cm}(2cm,2cm)
	\textcolor{red}{\textbf{Author's accepted manuscript, to appear in the proc. of ECCV24.}}
	\end{textblock*}

\newcommand{\txt}[1]{{\texttt{{#1}}}}

\newcommand{\yun}[1]{\textcolor{red}{(YUN: #1)}}

\newcommand{\xz}[1]{\textcolor{red}{(XZ: #1)}}
\newcommand{\xiaowei}[1]{{\color{blue}#1}}

% ---------------------------------------------------------------
% TODO REVIEW: Replace with your title
\title{ProTIP: Probabilistic Robustness Verification on Text-to-Image Diffusion Models against Stochastic Perturbation} 

% TODO REVIEW: If the paper title is too long for the running head, you can set
% an abbreviated paper title here. If not, comment out.
\titlerunning{Probabilistic Robustness Verification on Text-to-Image Diffusion Models}

% TODO FINAL: Replace with your author list. 
% Include the authors' OCRID for the camera-ready version, if at all possible.

\author{Yi~Zhang\inst{1}
%\orcidlink{0000-1111-2222-3333}
\and
Yun~Tang\inst{1}
%\orcidlink{1111-2222-3333-4444} 
\and
Wenjie~Ruan\inst{2}
\and
Xiaowei Huang\inst{2}
\and
Siddartha Khastgir\inst{1}
\and
Paul Jennings\inst{1}
\and
Xingyu Zhao\inst{1}\textsuperscript{(\Letter)}
}

% TODO FINAL: Replace with an abbreviated list of authors.
\authorrunning{Y.~Zhang et al.}
% First names are abbreviated in the running head.
% If there are more than two authors, 'et al.' is used.

% TODO FINAL: Replace with your institution list.
\institute{WMG, University of Warwick, Coventry CV4 7AL, U.K.
\email{\{yi.zhang.16,yun.tang,s.khastgir.1,paul.jennings,xingyu.zhao\}@warwick.ac.uk}\\
\and
Computer Science Department, University of Liverpool, Liverpool L69 3BX, U.K.\\
\email{w.ruan@trustai.uk}; ~\email{xiaowei.huang@liverpool.ac.uk}}

\maketitle

\begin{abstract}
Text-to-Image (T2I) Diffusion Models (DMs) excel at creating high-quality images from text descriptions but, like many deep learning models, suffer from robustness issues.
While there are attempts to evaluate the robustness of T2I DMs as a \textit{binary} or \textit{worst-case} problem, they cannot answer how robust \textit{in general} the model is whenever an adversarial example (AE) can be found.
In this study, we first formalise a \textit{probabilistic} notion of T2I DMs' robustness; and then devise an \textit{efficient} framework, ProTIP, to evaluate it with \textit{statistical guarantees}. The main challenges stem from: \textit{i)} the high computational cost of the image generation process; and \textit{ii)} identifying if a perturbed input is an AE involves comparing two output \textit{distributions}, which is fundamentally harder compared to other DL tasks like classification where an AE is identified upon misprediction of labels.
To tackle the challenges, we employ \textit{sequential analysis with efficacy and futility early stopping rules} in the statistical testing for identifying AEs, and \textit{adaptive concentration inequalities} to dynamically determine the ``just-right'' number of stochastic perturbations whenever the verification target is met.
Empirical experiments validate ProTIP's effectiveness and efficiency, and showcase its application in ranking common defence methods.

\keywords{Diffusion Models, Probabilistic Robustness, Safe AI}
\end{abstract}

\section{Introduction}

Recent advancements in Text-to-Image (T2I) Diffusion Models (DMs), including state-of-the-arts (SOTA) like DALL-E 3 \cite{BetkerImprovingIG}, Imagen \cite{saharia2022photorealistic}, Parti \cite{yu2022scaling} and Stable Diffusion \cite{rombach2022high}, enable the generation of high-quality images from text prompts.
% can generate high-quality images from textual descriptions. Additionally, Stable Diffusion \cite{rombach2022high} represents a successful large-scale implementation of the latent diffusion model. With the advent of T2I DMs, we can now create visually stunning images with simple text prompts.
However, studies \cite{zhuang2023pilot,fort2021pixels,gao2023evaluating} have shown that small perturbations in textual input can substantially degrade the performance of T2I DMs. Fig.~\ref{fig_perturbation_example} illustrates examples in which introducing a minor perturbation to the prompt leads to a non-trivial change in the generated images, raising concerns about the model’s robustness in the context of its downstream applications \cite{carlini2020evading,chen2023pathway}. As such, a pivotal question arises: \textit{how can we systematically evaluate and verify (when a specific verification target is provided) the robustness of T2I DMs?}
% \vspace{-0.5cm}
\begin{figure}[th!]
    \centering
 %   \vspace{-0.2cm}
\includegraphics[width=0.95\textwidth]{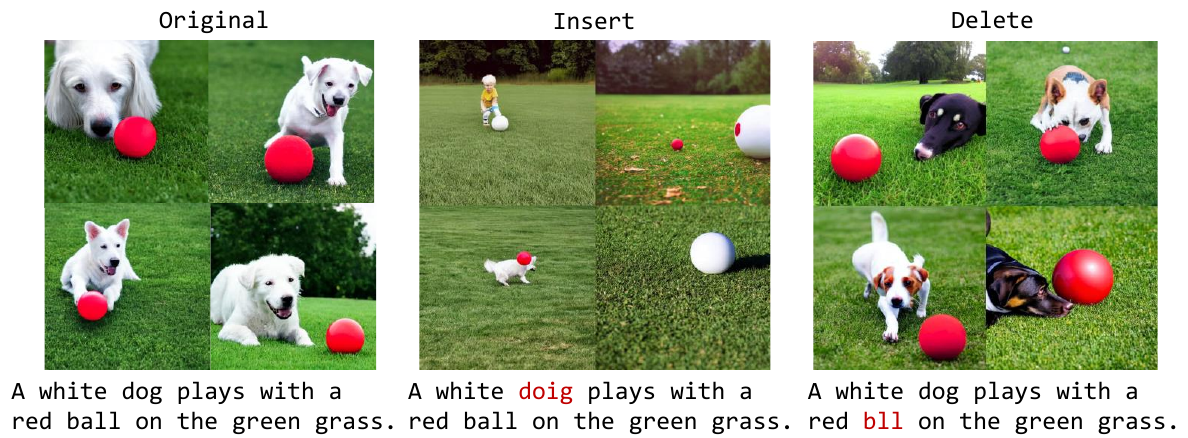}
   % \vspace{-0.1cm}
    \caption{Examples illustrating perturbations applied to the prompt for Stable Diffusion.
    }
%\vspace{-0.5cm}
    \label{fig_perturbation_example}
\end{figure}
%\xz{I change ML to DL, as some simple ML models may not have unrobustness problem; typically when the model is deep, they the the problem...}

The problem of lacking robustness in T2I DMs is not unexpected, considering that robustness issues are pervasive in the broader realm of Deep Learning (DL) \cite{szegedy2013intriguing,goodfellow2014explaining}. Generally, robustness is defined as the invariant decision of the DL model against small perturbations on inputs. 
A small perturbation on an input is termed an Adversarial Example (AE) if it leads to a different prediction from the label assigned to the original input. Over the past decade, numerous studies have attempted to frame DL robustness evaluation as a \textit{binary} or \textit{worst-case} problem, addressing questions like ``if AEs exist (in a small local region of the original input)?'' or ``what is the closest AE to the original input?'' \cite{yuan2019adversarial,chakraborty2021survey,huang_survey_2020}. Recently, emerging studies are adopting a \textit{probabilistic view}, formulating robustness verification as a statistical inference problem \cite{webb2018statistical,wang2021statistically,zhang2022proa,pmlr_v206_tit23a,Huang_2023_ICCV,dong_reliability_2023,pmlr_v97_cohen19c}. Such a probabilistic robustness notion is arguably of more practical interest than binary/worst-case ones, because it provides an \textit{overall} evaluation of how robust the model is whenever an AE can be found \cite{webb2018statistical,Huang_2023_ICCV} and accepts residual risks that are more realistic to achieve \cite{wang2021statistically,zhang2022proa}. We concur with this view and argue that T2I DMs also necessitate probabilistic robustness verification, which, to the best of our knowledge, is absent in SOTA.

%Most deterministic robustness works focus on adversarial attacks on input \cite{heo2019fooling}, essentially answering the binary question of whether any AE exists in a small norm ball region. Apart from this, another set of deterministic evaluation methods mainly studies worst-case metrics, such as the maximum change in resulting predictions when perturbations are made \cite{alvarez2018robustness}, and a local Lipschitz constant estimation problem \cite{yeh2019fidelity}. Probabilistic robustness, on the other hand, indicates how robust the model is by quantifying the likelihood of finding AEs using statistics-based techniques.

%The research on the robustness of T2I DMs follows a similar trajectory. Existing work has focused on deterministic robustness, such as \cite{gao2023evaluating} provided a robustness evaluation of T2I DMs against real-world attacks, with a focus on designing attack objectives that maximise model prediction variations. However, for T2I DMs, we also need a notion of how robust the model is in general whenever an AE can be found. 
%Estimating probabilistic robustness can establish a credible range for T2I generation models, facilitating their flexible application in the face of diverse task requirements. To the best of our knowledge, there is currently no work investigating the probabilistic robustness of T2I DMs. Therefore, we propose a probabilistic robustness evaluation framework for T2I DMs with statistical guarantees by utilising probabilistic inequalities.

In this work, we first formally define the probabilistic robustness of T2I DMs against 
%character-level 
stochastic perturbation; then, we devise an efficient framework, named ProTIP, to evaluate and verify it (assuming a verification target is given). In addressing the statistical inference problems within ProTIP, we encounter \textit{new challenges specific to the generative nature of DMs}. These challenges, distinct from traditional DL tasks like classification or regression, include: \textit{i)} the generation process is very computationally insensitive; \textit{ii)} the identification of a perturbed input as an AE entails \textit{comparing distributional differences}, which is fundamentally harder compared to DL tasks like classification where an AE is identified by a change of the prediction label. To tackle the challenges, we employ sequential analysis with efficacy and futility early stopping rules in the statistical testing for indicating AEs, and adaptive concentration inequalities to dynamically determine the ``just-right'' number of stochastic perturbations whenever the verification target is met. Experimental results validate the effectiveness and efficiency of our ProTIP, while an application of ProTIP is also presented to showcase its use in ranking common defences against text perturbations. 

In summary, key contributions of this paper include:
%\vspace{-0.3cm}
\begin{itemize}
    \item \textbf{Problem formulation:} 
    For the first time, we formulate the probabilistic robustness verification problem for T2I DMs against stochastic perturbation.
    %, with formalised evaluation metrics.
    \item \textbf{Efficient solution:} 
     To solve the formulated problem,
     %of probabilistic robustness verification for T2I DMs, 
     we develop an efficient framework, ProTIP, which incorporates several sequential analysis methods to dynamically determine the sample size and thus enhance the efficiency.
    \item  \textbf{Open-source repository:} A public repository containing the codes for ProTIP, along with datasets, models, and experimental results, is available at  \url{https://github.com/wellzline/ProTIP/}.
    %\url{https://github.com/wellzline/ProTIP/}.
\end{itemize}

\section{Preliminaries and Related Work}

\subsection{Text-to-Image Diffusion Models}
Generative AI has been thriving in the multi-modal field, with DMs emerging as a powerful new family of deep generative models with record-breaking performance in many applications, including video generation \cite{wu2023tune}, image reconstruction \cite{takagi2023high}, and T2I generation \cite{BetkerImprovingIG}. DMs are a class of probabilistic generative models that apply a noise injection process, followed by a reverse procedure for sample generation. Importantly, DMs allow a notion of control at generation time through so-called guidance, enabling several complicated generative tasks, among which T2I DMs are examples of guided image generation tasks using descriptive text as the guidance signal. Models like Stability AI's Stable Diffusion \cite{rombach2022high} and Google's Imagen \cite{saharia2022photorealistic}, trained on large-scale datasets of annotated text-image pairs, can generate high-quality images based on simple natural language descriptions. 
%In this context, Glide \cite{nichol2021glide} emerges as a text-guided diffusion model that not only supports image generation but also excels in the realm of image editing. 
Commercial products like DALL-E 3 \cite{BetkerImprovingIG} and Midjourney \cite{midjourney} have demonstrated remarkable capabilities in diverse text-to-image applications, enriching the field. We refer readers to Appendix A
%~\ref{sec_app_text2image_and_per_method} 
for more details on T2I DMs.

\subsection{Deep Learning Robustness}

DL models are notoriously unrobust to small perturbations \cite{yuan2019adversarial,chakraborty2021survey,huang_survey_2020}. %\xz{we can add in more papers of ours upon acceptance;}
While definitions of robustness vary in literature, they share a common intuition that a DL model's decision should remain invariant against small perturbations on a given input---typically it is defined as all inputs in a region $\eta$ have the same prediction label, where $\eta$ is a small norm ball (in a $L_p$-norm distance) of radius $\gamma$ around an input $x$. A perturbed input (e.g., by adding noise on $x$) $x'$ within $\eta$ is an AE if its prediction label differs from $x$.

In the Safe AI community, DL robustness has been \textit{the} property in the spotlight. Many studies on evaluating DL robustness have been done, framing the problem in different ways. In Fig.~\ref{fig_types_robustness}, we summarise 4 common ways of formulating the problem, inspired by \cite{dong_reliability_2023}. Earlier works, such as \cite{gehr2018ai2,katz2017reluplex,ruan2018reachability}, formulate the verification problem as a binary question by asking if any AEs can be found within a given input norm-ball of a specified radius, cf. Fig.~\ref{fig_types_robustness}(a). Such ``binary robustness'' can be normally evaluated in two ways \cite{katz2017reluplex,ruan2018reachability,8318388}: either through reachability algorithms that aim to determine the lower and upper bounds of the output within the input norm-ball $\eta$, using layer-by-layer analysis; or by solving it using SAT/SMT solvers as a variety of constraint-based programming problems. Fig.~\ref{fig_types_robustness}(b) poses a similar yet different question: what is the maximum radius of $\eta$ such that no AE exists within it? Intuitively, it is finding the ``largest safe perturbation distance'' for input $x$ \cite{aminifar2020universal,moosavi2017universal,weng2019proven,weng2018evaluating}. 
While in Fig.~\ref{fig_types_robustness}(c), it evaluates the model's robustness by introducing adversarial attacks to cause the maximum prediction loss in the specified norm ball $\eta$. It is often applied in \textit{adversarial training} to enhance the model's robustness to resist attacks \cite{madry2017towards,wang2019convergence}.

\begin{figure}[ht!]
    \centering
   % \vspace{-0.5cm}
\includegraphics[width=0.9\textwidth]{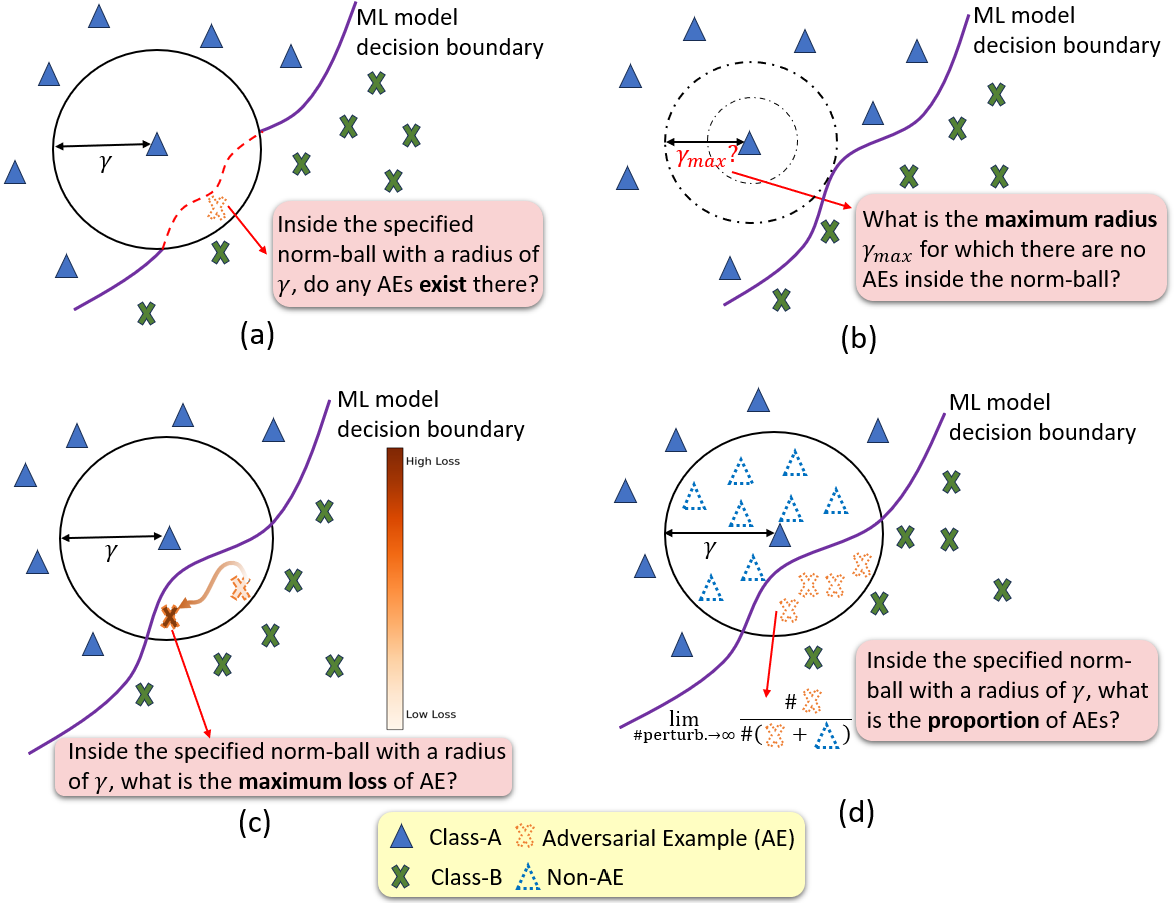}
%\vspace{-0.2cm}
    \caption{Four common formulations of robustness verification in DL---binary (a), worst-case (b \& c), and probabilistic (d) robustness.}
    \label{fig_types_robustness}
   % \vspace{-0.5cm}
\end{figure}

% \paragraph{Deterministic Robustness.}In \cite{dong2023reliability}, deterministic robustness is categorised into 3 types: SAT/UNSAT of satisfiability problems, Minimised Perturbation Distance and Maximised Prediction Loss, which correspond to (a), (b), and (c) in \cref{fig_types_robustness},  Typically, the traditional approach defines robustness estimation as a satisfiability problem that can be solved by SAT/SMT solvers \cite{gehr2018ai2} \cite{katz2017reluplex}.  In other words, when the local perturbation distance is constrained, the existence of AE is determined using a binary indicator. This method is usually suitable for smaller DL models and norm balls. Another more scalable approach, Minimised Perturbation Distance, is based on adversarial attacks, seeking the smallest perturbation distance that induces AEs \cite{aminifar2020universal} \cite{moosavi2017universal} \cite{weng2019proven}. Maximised Prediction Loss, on the other hand, evaluates the model's robustness by introducing adversarial attacks to cause the maximum prediction loss within a specified norm ball. This method is often applied in adversarial training to enhance the model's ability to resist attacks \cite{madry2017towards}.

The three aforementioned methods all estimate the robustness of the model by detecting the presence of AE or \textit{the} AE that gives the maximum loss/safe-radius, the so called \textit{deterministic} robustness \cite{zhang2022proa}. 
As argued by \cite{webb2018statistical}, they suffer from two major drawbacks: \textit{i)} they fail to convey \textit{how} robust the model is whenever an AE is found; \textit{ii)} they pose scalability challenges when the model is large. Thus, recent works \cite{webb2018statistical,wang2021statistically,zhang2022proa,pmlr_v206_tit23a,Huang_2023_ICCV,dong_reliability_2023,pmlr_v97_cohen19c} develop a new \textit{probabilistic} view, by defining robustness as the \textit{proportion} of AEs inside the norm-ball $\eta$, cf. Fig.~\ref{fig_types_robustness}(d). This probabilistic notion is arguably more practical, because: \textit{i)} binary/worst-case robustness focusing on extreme cases is neither necessary nor realistic, especially when the model is large; knowing the proportion of AEs is more relevant;
\textit{ii)} since all practical applications have acceptable levels of risk, it suffices to demonstrate that the violation probability is below a required threshold, rather than confirming it to be exactly zero. Without loss of generality (WLOG), we illustrate probabilistic robustness using a DL classification task as:
\begin{definition}[Probabilistic Robustness]
\label{def_Prob_rob_classification}
For a DL classifier $f$ that takes input $x\in \mathcal{X}$ and returns a prediction label, the probabilistic robustness of an input $x$ in a norm ball of radius $\gamma$, denoted as $B(x,\gamma)$, is:
\begin{equation}\label{eq_probabilistic_robustness_definition}
R (x, \gamma) := \int_{x' \in B(x,\gamma)} I_{\{f(x')=f(x)\}}(x') Pr (x') \, dx' 
\end{equation}
where $I_{\mathcal{S}}(x)$ is an indicator function---it is equal to 1 when $\mathcal{S}$ is true and equal
to 0 otherwise; $Pr(\cdot)$ is the local distribution of inputs representing how perturbations $x'$ are generated, which is precisely the ``input model'' used by \cite{webb2018statistical, weng2019proven}.
\end{definition}
In this paper, we re-frame such a generic definition of probabilistic robustness for T2I DMs and provide an efficient solution to its verification.

\subsection{Robustness of Text-to-Image Diffusion Models}
Despite variations of definitions, the key to evaluating and improving the model's robustness is detecting AEs. The approaches to this end are often referred to as ``adversarial attacks'' via input perturbations. Although such perturbations are commonly referred to as ``attacks'', they are \textit{not necessarily malicious actions of attackers} \cite{zhao_detecting_2021}. They may also represent natural sensor white noise \cite{hendrycks2021natural} or benign human errors that follow a \textit{stochastic} generation process. Note that in this work, we adopt such a generalised terminology of AEs to represent \textit{small and random} perturbations. For T2I models, such perturbations can be classified into character-level, word-level, sentence-level or multi-level, depending on the granularity of input perturbations. Recent studies \cite{maus2023black, fort2021pixels} show that T2I DMs are very sensitive to black-box attacks like text perturbation. In \cite{gao2023evaluating,zhuang2023pilot}, T2I DMs are shown to be vulnerable to realistic human errors (e.g., typos, glyphs, phonetic errors), exposing significant robustness issues due to weak text encoders. 

These studies focused only on \textit{deterministic} robustness (e.g., maximised prediction loss).
To the best of our knowledge, there is no dedicated exploration into the \textit{probabilistic robustness} of T2I models when they are subject to \textit{stochastic perturbations}, and our ProTIP is the first verification framework for this problem, backed by statistical guarantees. Moreover, ProTIP addresses unique challenges arising from the generative characteristics of T2I DMs, by adopting sequential analysis and adaptive concentration inequality to improve the verification efficiency, details of which are provided in the next section.

\section{Method: ProTIP}
%\xz{methodology is something more ``meta''... }
\subsection{Problem Statement}\label{sec:problem_statement}

A T2I DM that takes a text input $x\in \mathcal{X} $ and generates an image $y \in \mathcal{Y}$ essentially characterises the conditional distribution $Pr(Y \mid X=x)$\footnote{As usual, we use capital letters to denote random variables and lower case letters for their specific realisations; $Pr(X)$ is used to represent the distribution of variable $X$.}, i.e., the T2I DM is a function $f:\mathcal{X} \rightarrow \mathcal{D}(\mathcal{Y})$ where $\mathcal{D}$ represents the space of all possible distributions over the image set $\mathcal{Y}$. Accordingly, the general probabilistic robustness Def.~\ref{def_Prob_rob_classification} needs to be adapted for T2I DMs as:
\begin{definition}[Probabilistic Robustness of T2I DMs]
\label{def_Prob_rob_t2i}
For a T2I DM $f$ that takes text inputs $X$ and generates a conditional distribution of images $Pr(Y \mid X)$, the probabilistic robustness of the given input $x$ is: 
\begin{equation}\label{eq_T2I_probabilistic_robustness_definition}
R_{M}{(x, \gamma)} = \sum_{x': s(T(x),T(x')) \geq \gamma} I_{\{Pr(Y \mid X=x ) = Pr(Y \mid X=x')\}}(x') Pr(x')
\end{equation}
where $T$ denotes the CLIP \cite{radford2021learning} model's text encoder, $s$ is a similarity measurement function (e.g., cosine similarity), $\gamma$ denotes a given threshold on similarity. While $I$ is an indication function as defined in Def.~\ref{def_Prob_rob_classification}, its value now depends on whether the output distributions before and after the perturbation differ. $Pr(x')$ indicates the probability that $x'$ is the next perturbed text generated randomly.
\end{definition}

Intuitively, Def.~\ref{def_Prob_rob_t2i} suggests that $R_M(x,\gamma)$ is the expected probability that the output image distribution remains unchanged for a random perturbation $x'$ that preserves a similar semantic meaning to $x$ (ensured by $s(T(x),T(x')) \geq \gamma$). A ``frequentist'' interpretation of $R_M(x)$\footnote{Notation-wise, we omit $\gamma$ from $R_M$ where $\gamma$ represents a hyperparameter thereafter.}, following the gist of Fig.~\ref{fig_types_robustness}(d), is: it is the \textit{limiting relative frequency} of perturbations for which the output distribution is preserved, in an infinite sequence of independently generated perturbations.

\begin{definition}[Verification Target]
\label{def_verfication}
     The probabilistic robustness of the T2I DM $f$ when processing an input $x$ cannot be less than a certain lower bound $b_l$, with sufficient confidence $1-\sigma$, i.e.:
\begin{equation}
    \label{eq_verf_target}
    Pr(R_M(x) \geq b_l) \geq 1-\sigma
\end{equation}
where the pair $(b_l,\sigma)$ is the given verification target.
\end{definition}
To determine if a verification target is met for the input $x$, we propose the ProTIP workflow in Fig.~\ref{fig_framework}, addressing the following questions: \textit{i)} How to generate stochastic perturbations $x'$ (i.e., the implementation of $Pr(x')$ in Eq.~\eqref{eq_T2I_probabilistic_robustness_definition}); \textit{ii)} Given a perturbed input $x'$, how to determine if its new output distribution is significantly different to the original one (i.e., the implementation of $I(x')$ in Eq.~\eqref{eq_T2I_probabilistic_robustness_definition}); \textit{iii)} How to do decision makings based on statistical evaluations of $R_M(x)$ over a sequence of perturbations (i.e., even if we implemented both $Pr(x')$ and $I(x')$ in Eq.~\eqref{eq_T2I_probabilistic_robustness_definition}, the true $R_M(x)$ is still unknown due to the fact that we cannot exhaustively take the sum of their product over all possible perturbations $x'$; thus, we can only estimate it from a finite sample of perturbed inputs).

%\xiaowei{not sure what this point refers to. It shouldn't be (3) as there is no perturbations. \xz{now add in some explanations} }

While the aforementioned question \textit{i)} is established for which we adopt SOTA 
%\xiaowei{? reference \xz{done}} 
methods, e.g., \cite{morris2020textattack}, to generate stochastic perturbations on text (cf. Sec.~\ref{sec_pert_method}), questions \textit{ii)} and \textit{iii)} are relatively challenging. Because, both $Pr(Y\mid X=x)$ and $Pr(Y\mid X=x')$ are unknown non-parametric distributions that require sampling images from them (which is costly for T2I DMs) to determine if they are significantly different. To tackle, we propose the \textit{sequential analysis with early stopping rules} and \textit{adaptive concentration inequality} for the last two questions, corresponding to the ``inner loop'' and ``outer loop'' of Fig.~\ref{fig_framework}, respectively.
\begin{figure}[h!]
%\vspace{-0.1cm}
    \centering
    \includegraphics[width=1\textwidth]{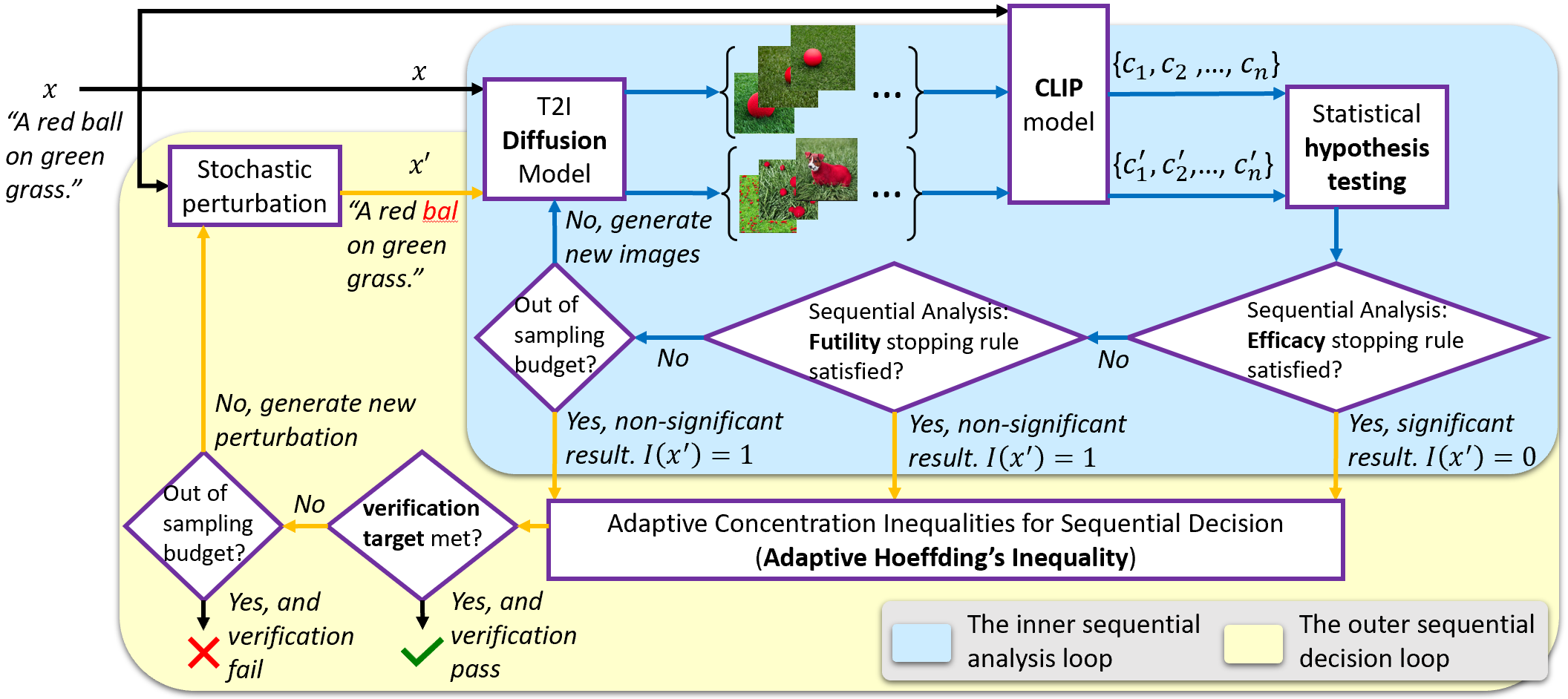}
    %\vspace{-0.5cm}
    \caption{Workflow of ProTIP.}
    \label{fig_framework}
    %\vspace{-0.7cm}
\end{figure}

%\xiaowei{Just a quick thought for future exploration and I am not sure by myself (so feel free to ignore): is it possible to utilise the forward and backward diffusion process of a diffusion model? currently this is a completely black-box process. \xz{Yes, agreed, maybe a journal version, e.g., for evaluating rare events}}

\subsection{Stochastic Perturbation Generation}
\label{sec_pert_method}

While ProTIP can cope with any \textit{text perturbations generated stochastically}, WLOG, we study five \textit{character-level} text perturbation methods from \cite{morris2020textattack}. 
First, we set the perturbation rate like 10\%, indicating that 10\% of the words in the sentence will be perturbed. 
%{\color{blue}{(Given a sentence that includes 12 words, a perturbation rate of 10\% means that \( 12 \times 10\% \approx 1 \) words will be perturbed.)}} 
Then, one of the five perturbation methods (insert, substitute, swap, delete, and keyboard error) shown in Table~\ref{tab_perturbations} of Appendix~\ref{sec_app_Stochastic_Perturbation_Generation} will be randomly selected. Using insertion perturbation as an example, a letter from the 26 English alphabets is randomly selected and inserted into a randomly chosen position within a word in the sentence. This process is repeated until the perturbed sentence reaches a 10\% perturbation rate, ensuring it's not a duplicate before stopping. Taking ``\textit{A white dog plays with a red ball on the green grass}'' as an example, after insertion perturbation, we get the perturbed sentence ``\textit{A white \textcolor{red}{daog} plays with a red ball on the green grass}''. More details and examples about each text perturbation method can be found in Table~\ref{tab_perturbations} of Appendix~B
%\ref{sec_app_Stochastic_Perturbation_Generation}.

% \begin{table}[!ht]
%     \centering
%     \caption{Examples of stochastic text perturbations}\label{tab_perturbations}
%     \resizebox{\textwidth}{!}{%
%     \begin{tabular}{p{2cm} p{6cm} p{6cm}}
%     \toprule \toprule
%         Perturbation & Description & Example \\ \midrule
%         Insert & Insert a character randomly  & A \textcolor{red}{reed} ball on green grass under a blue sky. \\ 
%         Substitute & Substitute a character randomly & A red ball on green grass under a \textcolor{red}{blae} sky. \\ 
%         Swap & Swap two characters randomly & A red ball on green \textcolor{red}{garss} under a blue sky. \\ 
%         Delete & Delete a character randomly & A red ball on \textcolor{red}{reen} grass under a blue sky. \\ 
%         Keyboard & Substitute a char. %XZ: Yi, Fig. is the shorter form of Figure, the dot means we truncate the word. Likewise, char. means character but replacing the ``acter'' with a `.'. Here the dot is different from a full-stop sign. 
%         by keyboard distance  & A red ball on green grass under blue \textcolor{red}{sjy}. \\ 
%         \bottomrule
%     \end{tabular}
%     }
% \end{table}

As Def.~\ref{def_Prob_rob_classification}, the intuition behind robustness suggests that the outputs of a DL model should remain invariant to perturbations on the inputs, while the inputs, before and after the perturbation, should be deemed very similar---thus, Def.~\ref{def_Prob_rob_classification} restricts the perturbation in a norm-ball $B(x,\gamma)$ with a small radius $\gamma$. Such perturbation distance control is easily achievable in Computer Vision where the inputs (either raw pixel values or the latent space feature values) can be normalised and projected into a continuous input space. However, for DL models take text inputs (like T2I DMs), the discrete nature of text make it more challenging to study robustness \cite{li2023perturbscore} as a single character change may completely alter the semantics meaning.
E.g., ``a red \textcolor{red}{ball} on ...'' and ``a red \textcolor{red}{bell} on ...'' have totally different semantic meanings with only one character perturbed. Thus, to study the robustness of T2I DMs, first we need to ensure that the perturbed text \(x'\) has the same semantics to the original text \(x\). Following the idea from \cite{gao2023evaluating,zhuang2023pilot,zhang2024revealing, liu2023riatig, struppek2023rickrolling, zhai2023text} for the same problem, we employ CLIP \cite{radford2021learning} as the model for text encoding, leveraging CLIP's ability to represent image and text information while preserving their relationships.

%{\color{blue}{(And existing work \cite{hessel-etal-2021-clipscore} have demonstrated CLIP Score is generally reliable and highly correlated with human judgement in user experiments.)}}

Therefore, as per Def.~\ref{def_Prob_rob_t2i}, we use CLIP's text encoder \(T\) to extract the original embeddings $T(x)$ and perturbed sentence embeddings $T(x')$ to compute the similarity score $s(T(x),T(x'))$ (e.g., by cosine similarity). Only when the similarity score is greater than a given threshold $\gamma$, we deem $x'$ is a valid perturbation, where $\gamma$ can be determined by the method developed in \cite{li2023perturbscore} (which yields the maximum perturbation distance in the continuous embedding space that preserves the semantics meaning for a given text). In this way, we ensure that only valid perturbations (sharing similar semantics as the original text) are used for robustness evaluation in ProTIP.

% \xz{Yi, could you please double check if I remember correctly---the perturbscore essentially determines a maximised distance in the embedding space that preserves the semantics which is exactly our $\gamma$}. 
% \yun{not sure if we have in the later sections or appendix, but it would be good if we have some perturbed input examples showing the defined $\gamma$ separating them into two groups, i.e., within and out-of the norm-ball.}
% \xz{Let us chat about it, I am concerned it would open a jar of worms... this is not a key nor original part of ProTIP rather reuse others idea, so I was actually thinking to make it very vague and ``let it slide''.. }

\subsection{Indication of Adversarial Examples}
\label{sec_trial_metric}

For a given perturbed input $x'$, whether it is an AE depends on if the T2I DM output distribution $Pr(Y \mid X=x')$ is significantly different from $Pr(Y \mid X=x)$. In this section, we describe the details of how ProTIP solves the question as a sequential, two-sample statistical testing problem, shown as the ``inner loop'' in Fig.~\ref{fig_framework}. 
Although there are techniques capable of \textit{measuring} the distance between two given distributions, e.g., the KL divergence \cite{KLdivergence} and the Maximum Mean Discrepancy \cite{gretton2012kernel}, they are not a \textit{test} and thus they do not provide a statistically principled way to determine \textit{whether or not} the distributions are different.

\subsubsection{Statistical hypothesis testing for indicating AE.}
% First, a set of images is generated for the perturbed input $x'$, denoted as $\{y'_1,\dots, y'_n \}$, i.e., a set of independent and identically distributed (i.i.d.) samples from the distribution $Pr(Y \mid X=x')$. Off-the-shelf two-sample test tools do not perform well for high-dimensional data like images \cite{gretton2012kernel}. Thus, similar to \cite{gao2023evaluating}, we first use CLIP model's image encoder \(G\) and text encoder \(T\) to obtain the image embedding and text embedding. Then, for each image $y'_i$, we can calculate the ``CLIP score'' $c'_i$ as the \textit{coexistence likelihood} of the image $y'_i$ and the original text $x$, by measuring $c'_i:=  max(100*cos(T(x)^\top G(y'_i)),0)$, yielding a set of i.i.d. CLIP scores $\{c'_1, \dots, c'_n\}$.
% We can repeat the same calculation for each generated image $y_i$ from the original text $x$, and collectively form another set of i.i.d. CLIP scores $\{c_1, \dots, c_n\}$ corresponding to $x$. 

First, a set of images is generated for the perturbed input $x'$, denoted as $\{y'_1,\dots, y'_n \}$, i.e., a set of independent and identically distributed (i.i.d.) samples from the distribution $Pr(Y \mid X=x')$. Off-the-shelf two-sample test tools do not perform well for high-dimensional data like images \cite{gretton2012kernel}. Thus, similar to \cite{gao2023evaluating}, we adapt CLIP Score to evaluate the correlation between a generated caption for an image and the actual content of the image, which has been found to be highly correlated with human judgement \cite{hessel-etal-2021-clipscore}. The metric is formally defined as  \cite{hessel-etal-2021-clipscore}:
\begin{equation}
    \label{eq_clip}
    CLIPScore(x,y):=  max(100 \cdot cos(T(x), V(y)),0)
\end{equation} %\xz{to be consistant with notations used ealier..}
which corresponds to the cosine similarity between visual CLIP embedding $V(y)$ for an image $y$ and textual CLIP embedding $T(x)$ for a caption $x$. We refer readers to Appendix B for details on the CLIP Score.
%{\color{blue}{(We also summarise the rationality of choosing CLIP Score as the metric. For detailed information, please refer to Appendix~\ref{sec_app_Stochastic_Perturbation_Generation}.)}} The score is between 0 and 100, and the higher the better.
% we first use CLIP model's image encoder \(G\) and text encoder \(T\) to obtain the image embedding and text embedding. 
For each image $y'_i$, we calculate a set of i.i.d. ``CLIP scores'' $\{c'_1, \dots, c'_n\}$ by measuring $c'_i:=  CLIPScore(x,y_i')$. A CLIP score $c'_i$ represents the \textit{coexistence likelihood} of the image $y'_i$ and the original text $x$.
We repeat the same calculation for each image $y_i$ generated from the original text $x$, collectively forming another set of CLIP scores $\{c_1, \dots, c_n\}$.
%corresponding to $x$.

Intuitively, a higher $c'_i$ indicates a ``less adversarial'' $x'$ \cite{gao2023evaluating,Du2023StableDI}. Thus, if $x'$ is not an AE, then the statistics on the group of $c'_i$s should not be significantly smaller than the group of $c_i$s, for which we can do an \textit{one-sided statistical hypothesis testing} with the following null and alternative hypotheses:
%\vspace{-0.3cm}
\begin{itemize}
    \item $\mathcal{H}_0$: There is no difference in the two groups of CLIP scores.
    \item $\mathcal{H}_1$: The CLIP scores of the $c'_i$ group is smaller than the $c_i$ group.
\end{itemize}
While there are established statistical testing methods, e.g., the t-test, u-test, and their variants \cite{lakens_2022_imp}, that can be employed to answer the above question, their applicability depends on whether the data distribution meets certain assumptions. Our ProTIP is compatible with any applicable statistical tests in this step, cf. later Sec.~\ref{sec_experiments} for our choice in the experiments. We can now implement the AE indicator function in Def.~\ref{def_Prob_rob_t2i} as:
\begin{equation}
\label{eq_hyp_test}
\begin{split}
& I{(x')} = \begin{cases}
\begin{aligned}
&1, \quad \text{accept the null hypothesis } \mathcal{H}_0\\
&0, \quad \text{reject the null hypothesis } \mathcal{H}_0
\end{aligned}
\end{cases}
\end{split}
\end{equation}

\subsubsection{Sequential analysis with early stopping rules.}
\label{sec_efficiency_eval}

In contrast to conventional DL tasks like classification, where detecting an AE involves an ``one-off'' test to see if the prediction label changes, determining if a perturbation affects the T2I DM requires \textit{multiple generations} and comparing the \textit{distributional differences} between two sets of images generated before and after the perturbation. This shift in the evaluation approach leads to a significantly increased computational workload. 
%requiring the model to generate approximately 30-60 images, often taking 50 to 100 seconds. % XZ: sounds not that bad, so just omit this example... :-)
%To improve the efficiency of solving the hypothetical testing in Eq.~\eqref{eq_hyp_test},
That said, we employ sequential analysis with early stopping rules \cite{jennison1999group, proschan2006statistical, wassmer2016group} for the hypothetical test in Eq.~\eqref{eq_hyp_test}. Instead of collecting all data with the maximum sample size (fixed at the design time) and then analysing the data \textit{a single time by the end}, sequential analysis conducts \textit{interim} analyses during data collection, so that we can prematurely stop data collection at an interim analysis upon rejecting or accepting the null hypothesis. Thanks to early stopping, sequential designs will, on average, require fewer samples \cite[Chap.~10]{lakens_2022_imp}.

In ProTIP, we adopt sequential analysis with both \textit{Efficacy} and \textit{Futility} stopping rules, by controlling the $\alpha$ (Type I error, false positive when a null hypothesis is incorrectly rejected) and the $\beta$ (Type II error, false negative when the null hypothesis is accepted and it is actually false), respectively. Intuitively, the Efficacy stopping rule says, if the analysis reveals a statistically significant result at some interim stage, data collection can be terminated because we reject $\mathcal{H}_0$ when observing the p-value $p<\alpha_k$ (where $\alpha_k$ is the cumulative Type I error rate at the $k$-th interim analysis, controlled by alpha-spending functions \cite{lakens_2022_imp}). On the other hand, we may also stop for futility, which means it is either impossible or very unlikely for the final analysis to yield $p < \alpha$. This can be implemented by controlling the Type II error rate across interim analysis using a beta-spending function. We refer readers to \cite[Chap.~10]{lakens_2022_imp} for more details on the topic.

Specifically, we employ the \txt{R} package \textsc{rpact} \cite{rpact} to perform the sequential analysis by setting Pocock type alpha/beta spending functions \cite{gordon1983discrete} 
%and Pocock type beta spending function 
to determine the sample size and the thresholds for efficacy/futility early stopping rules. Details are presented in Sec.~\ref{sec_experiments} and Appendix~D

\subsection{Decision-making for Verification}
\label{sec_evaluation_pr}

While we establish a method of indicating AEs for \textit{a given perturbation} $x'$ in the last section, to make the decision if a given verification target (cf. Def.~\ref{def_verfication}) is met we need to estimate $R_M$ over \textit{a population of perturbations}, for which we propose the ``outer loop'' in Fig.~\ref{fig_framework} and explain details in this section.

Concentration inequalities \cite{boucheronconcentration}, such as Chernoff inequality, Azuma's bound, and Hoeffding's inequality, represent important statistical methods widely employed in ensuring dependable decision-making with probabilistic guarantees. Specifically, in probability theory, Hoeffding's inequality \cite{hoeffding1994probability} provides a bound on the probability that the sum of bounded independent random variables deviates from its expected value by more than a certain amount. In ProTIP, for an input $x$, the T2I DM's probabilistic robustness $R_M(x)$ is the proportion of AEs over a population of all possible perturbations on $x$. Normally this population is very large (if not infinite)\footnote{The population size depends on the length of the original text $x$ and perturbation parameters like perturbation rate. Note, ProTIP can also cope with extreme (and simpler) cases in which the perturbation population is small and its members can be enumerated, e.g., when $x$ is a single word with one character to be changed.}, thus the ground truth of $R_M(x)$ can only be estimated as the sample mean of samples drawn from the population. Irrespective of the sample size, there will inevitably be discrepancies between the sample mean and the ground truth population mean. Nevertheless, Hoeffding's inequality allows us to establish tight probabilistic bounds on this error (cf. Appendix~C
%~\ref{sec_app_hi}
for more on Hoeffding's inequality). A limitation of Hoeffding's inequality is its requirement for a \textit{predetermined, process-independent} sample size. However, in most circumstances, we generally have no idea how many samples are sufficient to verify the model, a priori. Thus, it is common to allocate a maximised sample size that accommodates the budget limit which may result in unnecessary waste.

Inspired by \cite{zhao2016adaptive,zhang2022proa}, our ProTIP employs an ``adaptive'' version of Hoeffding's inequality, in which the sample size itself is a variable. This enables the sampling process to stop as soon as the ``just right'' number of perturbations has been tested for making the verification decision. Moreover, the following theorem proves that the theoretical guarantee on the estimation errors is preserved:
\begin{theorem}[Adaptive Hoeffding's Inequality]
\label{thm_adp_hoeff}
    We know \(I(x'_i)\) is a binary 0--1
    %$\frac{1}{2}$-subgaussian 
    random variable. Let $\hat{\mu}_I^{(n)}=\frac{1}{n}\sum_{i=1}^{n} I(x'_i)$ (i.e., the sample mean). Also let $J$ be a random variable on $\mathbb{N} \cup \{\infty\}$, and 
     $\varepsilon(\sigma, n) = \sqrt{\frac{0.6 \cdot \log(\log_{1.1}n+1) + 1.8^{-1} \cdot \log\left(24/{\sigma}\right)}{n}}$, then we have:
\begin{equation}
\label{eq_adp_hoeff}
Pr\left( | \hat{\mu}_I^{(J)} - R_M| \leq \varepsilon(\sigma,J) \right) \geq 1-\sigma
\end{equation}
where $R_M$ is the true population mean of $I(x'_i)$, and $\sigma$ is a given confidence level.
\end{theorem}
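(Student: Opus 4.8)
The plan is to reduce Theorem~\ref{thm_adp_hoeff} to a known ``adaptive'' or ``law-of-the-iterated-logarithm style'' concentration bound for bounded i.i.d.\ random variables, namely the one established by Zhao et al.~\cite{zhao2016adaptive} (and used in \cite{zhang2022proa}). The key observation is that $I(x'_1), I(x'_2), \dots$ are i.i.d.\ Bernoulli random variables taking values in $[0,1]$ with common mean $R_M$, so the partial sums $S_n = \sum_{i=1}^n I(x'_i)$ form a bounded martingale-difference average to which the anytime-valid (stopping-time) version of Hoeffding's inequality applies. The function $\varepsilon(\sigma,n)$ is precisely the deviation bound arising from a union bound over a geometric grid of sample sizes with base $1.1$ (hence the $\log_{1.1} n$ term) together with the $\log(24/\sigma)$ term that distributes the failure budget $\sigma$ across this grid; the numerical constants $0.6$ and $1.8^{-1}$ are the ones that make the resulting series of tail probabilities sum to at most $\sigma$.

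The steps I would carry out are as follows. First, recall the fixed-$n$ Hoeffding bound: for i.i.d.\ $[0,1]$-valued variables, $Pr(|\hat\mu_I^{(n)} - R_M| > t) \le 2\exp(-2nt^2)$. Second, for a fixed confidence-splitting scheme, choose a sequence of sample sizes $n_j = \lceil 1.1^j \rceil$ and assign to the $j$-th checkpoint a failure probability proportional to $1/(j+1)^2$ (or a similar summable sequence) scaled so the total is $\sigma$; inverting the Hoeffding bound at $n_j$ with this budget yields a per-checkpoint radius of the form $\sqrt{(c_1 \log(j+1) + c_2 \log(1/\sigma))/n_j}$. Third, extend the bound from the grid points $n_j$ to all intermediate $n$ by a monotonicity argument: since consecutive grid points differ by the multiplicative factor $1.1$, the radius at a grid point controls the radius at all $n$ in between up to the constant absorbed into $c_1, c_2$ — this is where the base-$1.1$ choice and the $\log_{1.1} n + 1$ argument of the outer logarithm come from. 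Fourth, apply a union bound over all $j$ to get that, simultaneously for all $n$, $|\hat\mu_I^{(n)} - R_M| \le \varepsilon(\sigma, n)$ with probability at least $1-\sigma$. Finally, since this holds for \emph{all} $n$ simultaneously, it holds in particular at the random index $J$ regardless of how $J$ depends on the data (optional-stopping is free once the event is an ``anytime'' event), which gives Eq.~\eqref{eq_adp_hoeff}. Crucially, $J$ need not be a stopping time in the filtration-adapted sense, nor even finite — the bound is vacuous but still valid on $\{J = \infty\}$ — because the underlying event $\bigcap_{n} \{|\hat\mu_I^{(n)} - R_M| \le \varepsilon(\sigma,n)\}$ has already been shown to have probability $\ge 1-\sigma$.

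The main obstacle is verifying that the specific constants $0.6$ and $1.8^{-1}$ (and the $24$ inside the logarithm) actually make the union-bound series converge to something $\le \sigma$, while simultaneously covering the ``in-between'' sample sizes; this is the delicate bookkeeping in \cite{zhao2016adaptive} that ties the geometric-grid spacing, the $2\exp(-2nt^2)$ Hoeffding constant, and the choice of summable weights together. I would not re-derive these constants from scratch but instead cite the adaptive Hoeffding inequality of \cite{zhao2016adaptive} and note that its hypotheses (bounded i.i.d.\ summands, here $I(x'_i) \in \{0,1\}$) are met, with $\varepsilon(\sigma,n)$ being exactly their stated radius instantiated for Bernoulli data. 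A secondary, more conceptual point worth spelling out is \emph{why} the guarantee survives the data-dependent stopping: because the ``bad'' event is a countable intersection over a fixed grid that already fails with probability at most $\sigma$, and $\{|\hat\mu_I^{(J)} - R_M| \le \varepsilon(\sigma,J)\}$ is implied by that intersection on $\{J < \infty\}$ and trivially true on $\{J = \infty\}$, no further independence or adaptedness assumption on $J$ is required.
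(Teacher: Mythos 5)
Your proposal is correct and follows essentially the same route as the paper, which likewise establishes Theorem~\ref{thm_adp_hoeff} by invoking the adaptive concentration inequality of \cite{zhao2016adaptive} (as specialised to binary $I(x'_i)$ in \cite{zhang2022proa}), with the validity at an arbitrary random index $J$ resting on the bound holding uniformly over all $n$. Your additional sketch of the geometric-grid union bound and the role of the constants accurately describes the mechanism inside the cited proof, so there is nothing to flag.
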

The proof for Theorem~\ref{thm_adp_hoeff} is presented in \cite{zhang2022proa}, which adapts the more general proof in \cite{zhao2016adaptive} for the binary 0--1 variables $I_i$ and also rearranges terms to align with the form of the original Hoeffding's inequality. We further prove two corollaries in Appendix~C
%~\ref{sec_app_hi} 
to better explain our later experimental results: One compares the tightness of the two bounds derived from the original and adaptive Hoeffding's inequality; the other concerns their monotonicity with respect to $n$ and $\sigma$.

In ProTIP, we directly apply Eq.~\eqref{eq_adp_hoeff} by sequentially increasing the perturbation number $J$ from 1 to $j_{max}$ (the maximum sampling budget). Given the verification target $(b_l,\sigma)$ (cf. Def.~\ref{def_verfication}), whenever the \textit{empirically estimated lower bound} on $R_M(x)$ yielded by Eq.~\eqref{eq_adp_hoeff} (after rearranging the inequality) is greater or equal to the specified requirement $b_l$ (i.e., $\hat{\mu}_I^{(J)}-\varepsilon(\sigma,J)\geq b_l$), we stop generating new perturbations and assert ``pass'' for the verification. Otherwise, when $J=j_{max}$, ProTIP asserts ``fail''  with an estimated robustness.

%\xiaowei{If space allowed, give an example to show this is practical. For example, what J would be if delta is 0.99 and epsilon is 0.1. \xz{Maybe we can refer readers for the evaluations section for examples; Now I see, maybe we should have something similar like the one in \cite{zhao2016adaptive}; let us talk about it next time YI.}}

% In our problem, let $\{x_i \sim P_x\}_{i \in \mathbb{N}}$ be independent and identically distributed samples of $x$. $E{(n)} = \sum_{i=1}^{n} x_i$ be an estimate of the true value $R$, and let the stopping time $J$ be a random variable on $\mathbb{N} \cup \{\infty\}$ such that $P[J < \infty] = 1$. Then, for a given $\sigma \in \mathbb{R}^+$, we have:
% \[
% P\left(\left|E{(J)}/J
%  - R\right| \leq \varepsilon(\sigma, J) \right) \geq 1 - \sigma \quad 
% \]
% holds and fix $a$ and $c$ with the recommended values in \cite{zhao2016adaptive}, 0.6 and 1.1, respectively, so $\varepsilon(\sigma, n) = \sqrt{\frac{0.6 \cdot \log(\log_{1.1}n+1) + 1.8^{-1} \cdot \log\left(24/{\sigma}\right)}{n}}$. 

% The bound is very similar in form to Hoeffding inequality, with an extra O(loglog n) slack to achieve robustness to stochastic, adaptively chosen stopping time. We can guarantee that the random sample is unlikely to ever cross the boundary determined by adaptive Hoeffding's inequality.

\section{Experiments}
\label{sec_experiments}

%To evaluate the effectiveness and efficiency of the proposed framework, ProTIP, we conduct experiments with the following setup.

In our experiments, we study three versions of the widely acclaimed and open-source\footnote{We run experiments locally on our own servers for efficiency, as ProTIP generates a large number of queries to the model under verification. Thus, we exclude experiments on non-open source, commercial models, e.g., DALLE-3 and Midjourney.} Stable Diffusion (SD) model \cite{rombach2022high}, SD-V1.5, SD-V1.4 and SDXL-Turbo. While different versions of SD models have the same structure, the higher version is further trained based on the previous version; and SDXL-Turbo is based on Adversarial Diffusion Distillation \cite{sauer2023adversarial}.
We use the MS-COCO 
%(Microsoft Common Objects in Context) 
dataset \cite{lin2014microsoft}, a comprehensive collection designed for common computer vision tasks including captioning. The dataset comprises 328,000 images with captions, from which we randomly select captions as prompts for the T2I DM and then apply the stochastic perturbation method discussed in Sec.~\ref{sec_pert_method} to generate perturbed inputs.

%\paragraph{Software/Hardware for Computing} 
All experiments are conducted on NVIDIA GeForce RTX 3090 GPUs, Python 3.11, PyTorch 2.1.1. We use the off-the-shelf \txt{R} package \cite{Rpackage} to design group sequential analysis parameters, cf.~Appendix~D
%\ref{sec_app_r_package} 
for details. All our models, data, source code and experimental results are publicly accessible at our Github site.
%\url{https://github.com/omitted4doubleblind}.
%\url{https://github.com/wellzline/ProTIP}.

\subsection{Effectiveness of ProTIP}

In order to evaluate the effectiveness of ProTIP, i.e., how accurate our ProTIP is, we need to know the ground truth probabilistic robustness $R_M(x)$ for the given prompt $x$. However, as articulated in Sec.~\ref{sec:problem_statement}, the ground truth of $R_M(x)$ can never be known (as it would require exhaustively testing all possible perturbations). Consequently, we can only \textit{approximate the ground truth} by using a significantly larger number of samples than what would normally be used in ProTIP to demonstrate accuracy, which is arguably a common practice in statistical inference, e.g., \cite{webb2018statistical,Huang_2023_ICCV,dong_reliability_2023}. Specifically, we conduct the following two steps: \textit{i)} estimate the ground truth of $I(x')$ for a given perturbation $x'$ by generating a large number of images; \textit{ii)} then estimate the ground truth $R_M(x)$ using a large number of $x'$s with their ground truth $I(x')$. For step 1, we stop generating the images for the $x'$ until the distribution of its CLIP scores ``converge'' (i.e., the shape of the CLIP score distribution shows no significant changes when new images are generated). We then check whether the two CLIP score distributions follow a normal distribution. If yes, we execute a t-test, otherwise a u-test. In step 2, we generate a fixed number of $1,000$ perturbations, which is much larger than the adaptive number of perturbations (around $50\!\sim\!350$) used in ProTIP. Then the original Hoffeding's inequality (which yields a tighter error bound than the adaptive version used in ProTIP for the same number of perturbations, cf.~Corollary \ref{col_hi_tightness}) is applied to approximate the ground truth $R_M(x)$.
%Ultimately, the step 1 and step 2 form the baselines for ProTIP's ``inner sequential analysis loop'' (cf. Appendix \ref{sec_app_C}) and ``outer sequential decision loop'' in Fig.~\ref{fig_framework}, respectively.

% We cannot calculate the sum of their products over all possible perturbations \( x' \); thus, we can only estimate it from a finite sample of perturbed inputs, which is a common practice in statistical approaches to studying robustness.
% We conduct 1,000 random samplings from the set of \( x' \), performing robustness evaluation. The number of AE obtained from these 1,000 experiments is calculated as an approximate ground truth. Specifically, for each \( x' \), when we obtain the group of \( c_i \)'s, we check whether they and the \( c_i \)s corresponding to the original prompt \( x \) all follow a normal distribution. If they do, we execute a t-test to determine if there is no difference in the two groups of CLIP scores. If they do not follow a normal distribution, we conduct a u-test for hypothesis testing. Ultimately, the results of the hypothesis tests are used to determine whether the perturbation \( x' \) is an AE.

As shown in Fig.~\ref{fig:effectiveness}, for each given input \( x \), we conduct 3 sets of comparative experiments for different perturbation rates and different version of SD models. In each experiment, we run ProTIP for 3 different confidence levels $1-\sigma$ (coloured solid lines), and set the verification target to 0.8 (solid black line), while the dashed red line represents the (approximated) ground truth. ProTIP would stop whenever the verification target is met (i.e., the intersection point of the coloured and black lines). But for illustration, we also plot the assessment result after the intersection point, represented by dotted lines.

For a given prompt, Fig.~\ref{fig:effectiveness}(a) illustrates its robustness estimation with 10\% perturbation rate, where the estimates based on different confidence levels all converge to the ground truth. The noticeable gap between the approximated ground truth and our results is expected since ProTIP focuses on the lower bound estimation (cf. Eq.~\eqref{eq_verf_target}), being conservative. Such gap can be reduced when more perturbations are generated for a more accurate estimation (cf.~Corollary \ref{col_monotonicity_bounds}).
For the case of Fig.~\ref{fig:effectiveness}(a), we observe that ProTIP only requires $70 \!\sim\!100$ perturbations to achieve the given verification target $(0.8, \sigma)$.

Fig.~\ref{fig:effectiveness}(b) presents the results with an increased perturbation rate of 20\%. As expected, the ground truth robustness is lower, as well as the ProTIP results---it asserts ``fail'' reflects that the ground truth is lower than the verification target.
%From both (a) and (b), it can be observed that as the confidence level increases, more samples are required to reach the verification target. However, this result also exhibits randomness, as each sampling is stochastic. As shown in the appendix, for some prompts, a higher confidence level can lead to reaching the verification target more quickly. 
In Fig.~\ref{fig:effectiveness}(c), ProTIP is applied to a older version SD V1.4 model. Compared to (a), it is evident that, under the same 10\% perturbation rate, the estimated robustness of SD V1.4 are lower than those of SD V1.5, implying its weaker ability to withstand adversarial perturbations, which conforms to the observation in \cite{zhang2023robustness, lee2023holistic, wang2023the}. While Fig.~\ref{fig:effectiveness} only shows the results of a single prompt, results for more prompts and SDXL-Turbo are presented in Appendix~E
%~\ref{sec_app_more_results}.
\begin{figure}
    \centering
    %\vspace{-0.5cm}
    \includegraphics[width=\textwidth]{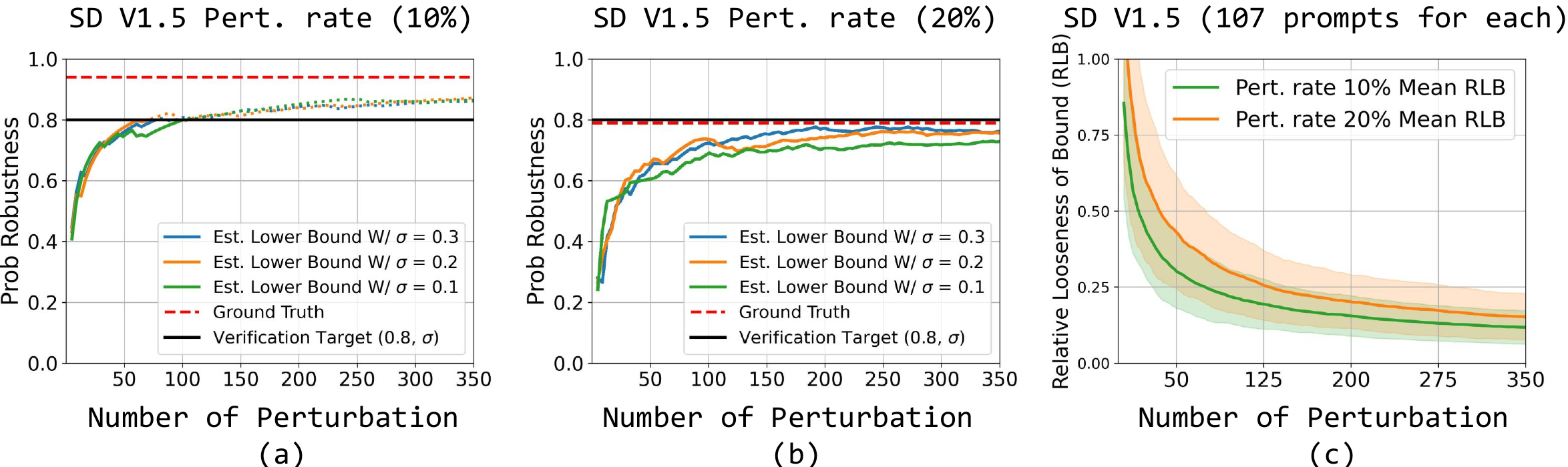}
    \caption{ProTIP results for a given prompt, with different confidence levels $1-\sigma$ (a)(b); Mean \& Std. (shared area) of RLB over 107 prompts (c).}
   % \vspace{-1cm}
    \label{fig:effectiveness}
\end{figure}

\subsection{Efficiency of ProTIP}

The efficiency of ProTIP manifests firstly in identifying AEs---the ``inner loop'' of Fig.~\ref{fig_framework}. 
%The aforementioned challenge of identifying a perturbed input as an AE involves comparing distributional differences which is computationally insensitive for T2I models. Thus, we employ sequential analysis with efficacy and futility early stopping rules in the statistical testing for indicating AEs. 
As the aforementioned sequential statistical testing method, following the parameter settings of Appendix~D,
%\ref{sec_app_r_package}, 
WLOG we divide it into 5 stages to conduct interim analysis. 
%In each stage, based on the results of the generated two sets of CLIP score distributions, we determine whether they meet the following criteria: 1) The efficacy stopping rule: when the analysis reveals a statistically significant result, stop data collection, reject the null hypothesis $\mathcal{H}_0$ and records the AE. 
% (there is no difference in the two groups of CLIP scores)
%2) The futility stopping rule: when it is either impossible or very unlikely for the final analysis to yield a p-value less than $\alpha$, stop data collection and accept the null hypothesis $\mathcal{H}_0$. \xz{we already articutaled those two rules in the method section, so we can omit this part..}
Fig.~\ref{fig:Efficiency}(a) illustrates the number of perturbations identified as Non-AEs/AEs at each interim analysis stage, for a given prompt processed by SD V1.5 with perturbation rates of 10\% and 20\%, as well as by SD V1.4 with a 10\% perturbation rate. 
For V1.5 with 10\% rate, we observe that, out of the total 400 perturbations, 347 are identified as Non-AEs. Thanks to the futility stopping rule, 246 of these Non-AEs can be identified at stage 1, saving 4 times the cost of generating images compared to what would be required without early stopping rules (i.e., test at the final stage 5). Considering other stages as well, in total it reduces the computational cost by detecting 97\% of Non-AEs before reaching the final stage 5. Similarly, 68\% of AEs are detected early due to the efficacy stopping rule. With a 20\% perturbation rate and an older version, the total number of AEs increased and Non-AEs decreased, yet overall, the effect of two early stopping rules in the statistical testing for indicating Non-AEs/AEs remains evident. While Fig.~\ref{fig:Efficiency}(a) demonstrates the results for only one prompt, statistics for 36 randomly selected prompts are presented in Table \ref{tab_efficiency}.

The other efficient aspect of ProTIP is demonstrated by introducing the adaptive concentration inequalities to dynamically determine the number of perturbations, cf.~the ``outer loop'' of Fig.~\ref{fig_framework}. To compare with the original Hoeffding's inequality which requires a predetermined, process-independent sample size, we have to do ``what if'' calculations by assuming the sample size used by it. Fig.~\ref{fig:Efficiency}(b) presents the ProTIP results and the (original) Hoeffding's inequality with sample sizes of 50, 100, and 200 (dotted cross in green). With sample size 50, Hoeffding's inequality asserts an incorrect verification result ``fail'', while ProTIP correctly verifies it, although with more samples of 78. This is non-surprising as the error bound is bigger when with limited samples (cf.~Corollary \ref{col_monotonicity_bounds}). Thus no practitioners would apply Hoeffding's inequality with such small sample size, rather allocate a much larger sample size for a small estimation error.
% In the case where the sample size is 200, we observe that both methods make the correct verification decision, while ProTIP saves 122 perturbations.
The case with sample size 200 is replicating this scenario, in which both methods make the correct verification decision while ProTIP saves 122 perturbations. 
For the case of sample size 100, indeed both methods yield similar results (and Hoeffding's inequality is slightly better, cf.~Corollary \ref{col_hi_tightness}) with similar numbers of samples. However, in practice, we never know such ``just right'' sample size a priori when applying Hoeffding's inequality, highlighting ProTIP's superiority in efficiency.
\begin{figure}
    \centering
   % \vspace{-0.4cm}
    \includegraphics[width=\textwidth]{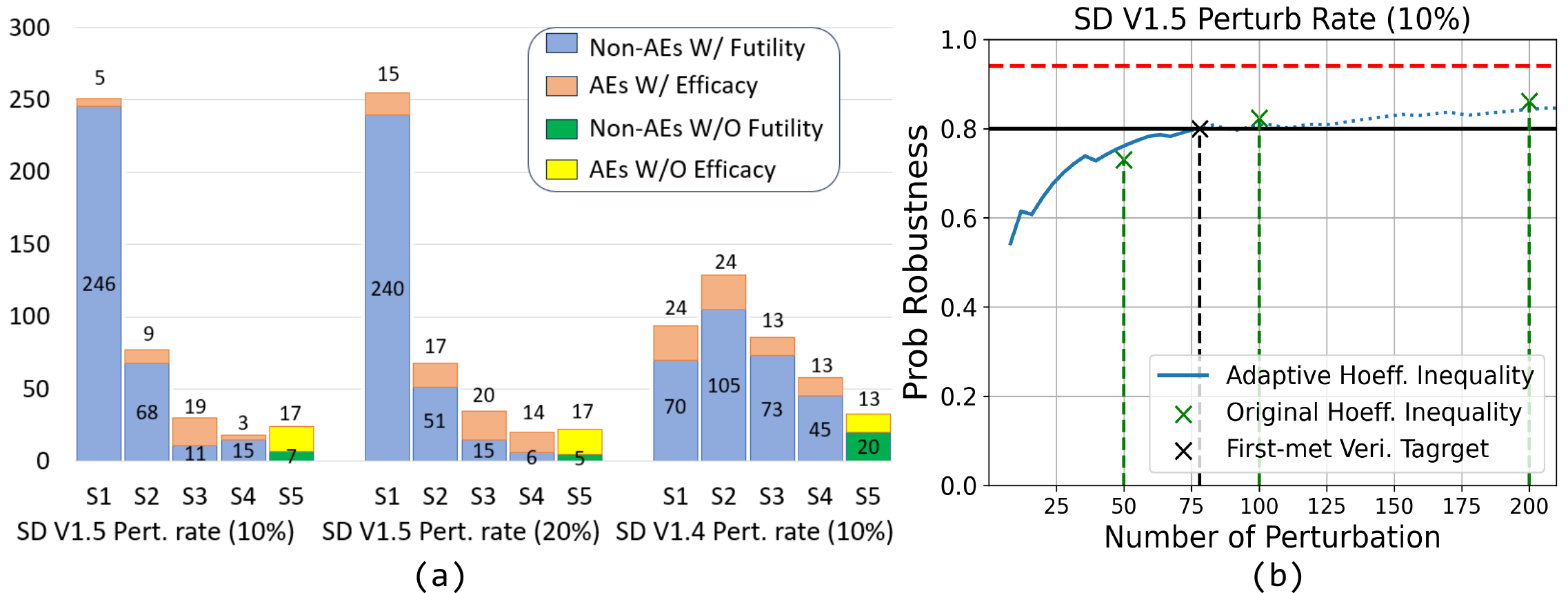}
   % \vspace{-0.5cm}
    \caption{(a) Number of perturbations (out of 400) identified as Non-AEs/AEs at each interim stage (S) in the sequential hypothesis testing. (b) ProTIP with the (adaptive) sample size of 78 vs. Hoeffding's inequality with fixed sample size 50, 100, and 200.}
   % \vspace{-0.7cm}
    \label{fig:Efficiency}
\end{figure}
% \begin{table}[!ht]
%     \centering
%     \caption{The mean percentage of perturbations identified as Non-AE/AE by the two early stopping rules before the final stage 5, over 36 randomly selected prompts.}
%     \vspace{-0.2cm}
%     \label{tab_efficiency}
%     \resizebox{0.95\textwidth}{!}{%
%     \begin{tabular}{|c|c|c|c|c|c|c|c|c|c|}
%     \hline
%         \multirow{2}{*}{Model} & \multirow{2}{*}{Pert. rate} & \multicolumn{2}{c|}{Stage 1} & \multicolumn{2}{c|}{Stage 2} & \multicolumn{2}{c|}{Stage 3} & \multicolumn{2}{c|}{Stage 4} \\
%         \cline{3-10}
%         & & Efficacy & Futility & Efficacy & Futility & Efficacy & Futility & Efficacy & Futility \\
%         \hline
%         \multirow{2}{*}{SD V1.5} & 10\% & 12\% & 31\% & 6\% & 21\% & 5\% & 11\% & 4\% & 5\% \\ 
%         \cline{2-10}
%         & 20\% & 21\% & 26\% & 10\% & 14\% & 6\% & 8.5\% & 5\% & 5\% \\ 
%         \hline
%         SD V1.4 & 10\% & 13\% & 26\% & 7\% & 22\% & 5\% & 12\% & 4\% & 6\% \\ 
%         \hline
%         SDXL Turbo & 10\% & 17\% & 34\% & 6\% & 18\% & 4\% & 9\% & 3\% & 5\% \\ 
%         \hline
%     \end{tabular}%
%     }
%     \vspace{-0.5cm}
% \end{table}

\begin{table}[!ht]
    \centering
    \caption{The mean and std. of perturbations identified as Non-AE/AE by the two early stopping rules before the final stage 5, over 36 randomly selected prompts.}
   % \vspace{-0.2cm}
    \label{tab_efficiency}
    \resizebox{0.95\textwidth}{!}{%
    \begin{tabular}{|c|c|c|c|c|c|c|c|c|c|}
    \hline
        \multirow{2}{*}{Model} & \multirow{2}{*}{Pert. rate} & \multicolumn{2}{c|}{Stage 1} & \multicolumn{2}{c|}{Stage 2} & \multicolumn{2}{c|}{Stage 3} & \multicolumn{2}{c|}{Stage 4} \\
        \cline{3-10}
        & & Efficacy & Futility & Efficacy & Futility & Efficacy & Futility & Efficacy & Futility \\
        \hline
        \multirow{2}{*}{SD V1.5} & 10\% & $47 \pm 32$ & $121 \pm 70$ & $22 \pm 16$ & $82 \pm 35$ & $20 \pm 14$ & $43 \pm 28$ & $14 \pm 12$ & $20 \pm 14$ \\ 
        \cline{2-10}
        & 20\% & $83 \pm 48$ & $99 \pm 67$ & $38 \pm 20$ & $54 \pm 26$ & $23 \pm 15$ & $34 \pm 19$ & $19 \pm 9$ & $19 \pm 15$ \\ 
        \hline
        SD V1.4 & 10\% & $48 \pm 29$ & $100 \pm 66$ & $25 \pm 16$ & $86 \pm 38$ & $19 \pm 17$ & $45 \pm 29$ & $16 \pm 10$ & $23 \pm 18$ \\ 
        \hline
        SDXL Turbo & 10\% & $64 \pm 44$ & $136 \pm 76$ & $21 \pm 14$ & $73 \pm 32$ & $15 \pm 13$ & $34 \pm 23$ & $12 \pm 8$ & $19 \pm 14$ \\ 
        \hline
    \end{tabular}%
    }
   % \vspace{-0.5cm}
\end{table}

\subsection{Application of ProTIP for Ranking Defence Methods}

As a robustness assessment tool, a natural use case of our ProTIP is to rank common defence methods for T2I DMs. For character-level perturbations that are perceivable and semantic, scrutinising the input is a direct and universally applicable defence. Thus, we study three commonly used misspelling checking tools: Python Autocorrect 0.3.0 \cite{gao2018black}, Pyspellchecker \cite{pyspellchecker}, and Gramformer \cite{Gramformer} for error correction on the perturbed inputs before inputting them into T2I DMs.

In Fig.~\ref{fig:defence}, for an example prompt, all three defence tools may improve the model's resistance to perturbations in inputs, compared to the case without using any defenders (the red curve). Notably, Autocorrect demonstrates superior overall performance compared to the other two tools. Such observations are further confirmed by the box plots in Fig.~\ref{fig_defence_ranking} over a set of 36 randomly selected prompts, ranking their performance. Cf. Appendix E
%\ref{sec_app_more_results} 
for results of more prompts/models.
\begin{figure}[h]
%\vspace{-0.8cm}
    \centering
    \includegraphics[width=\textwidth]{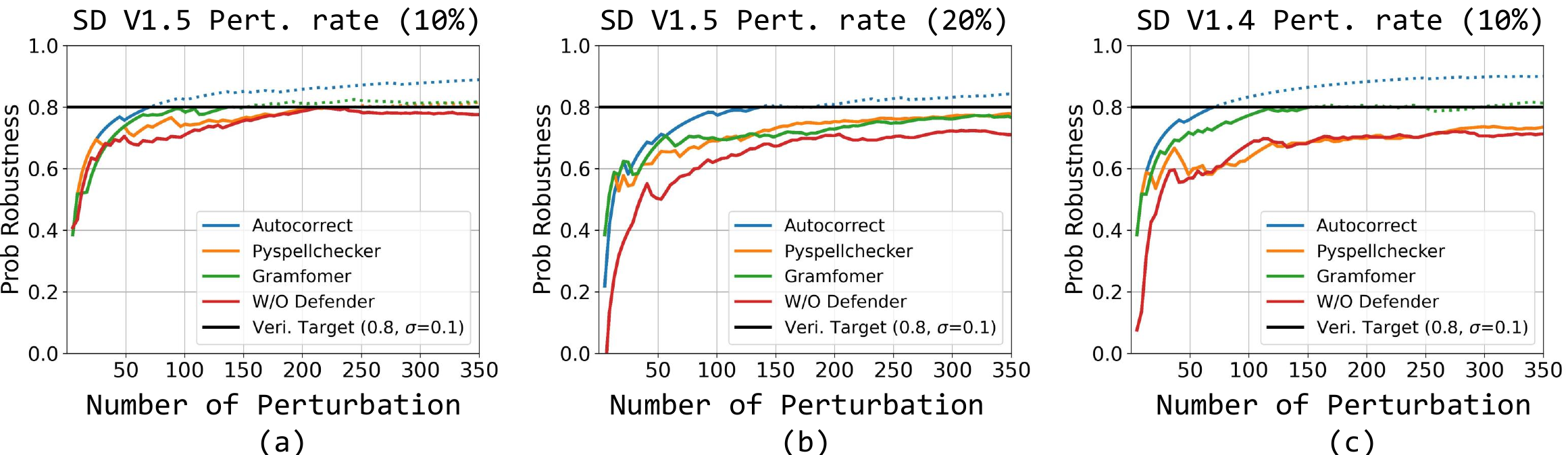}
    %\vspace{-0.5cm}
    \caption{Probabilistic robustness estimated by ProTIP with/without defence methods.}
   % \vspace{-0.5cm}
    \label{fig:defence}
\end{figure}

% We investigated possible defence methods to enhance the robustness of T2I DM and assessed the effectiveness of commonly used tools by ProTip. In the context of text perturbation, we introduced various robustness enhancement methods that primarily focus on input preprocessing. Specifically, for character-level attacks, which are both perceivable and semantic, scrutinising the input represents the most direct and universally applicable passive defence method. Incorporating spelling checks to preserve input fidelity can effectively counter basic adversarial techniques targeting typographical errors.

\section{Conclusion}
\begin{wrapfigure}{r}{0.35\textwidth}
    \centering
    \vspace{-2.2cm}
  \includegraphics[width=0.35\textwidth]{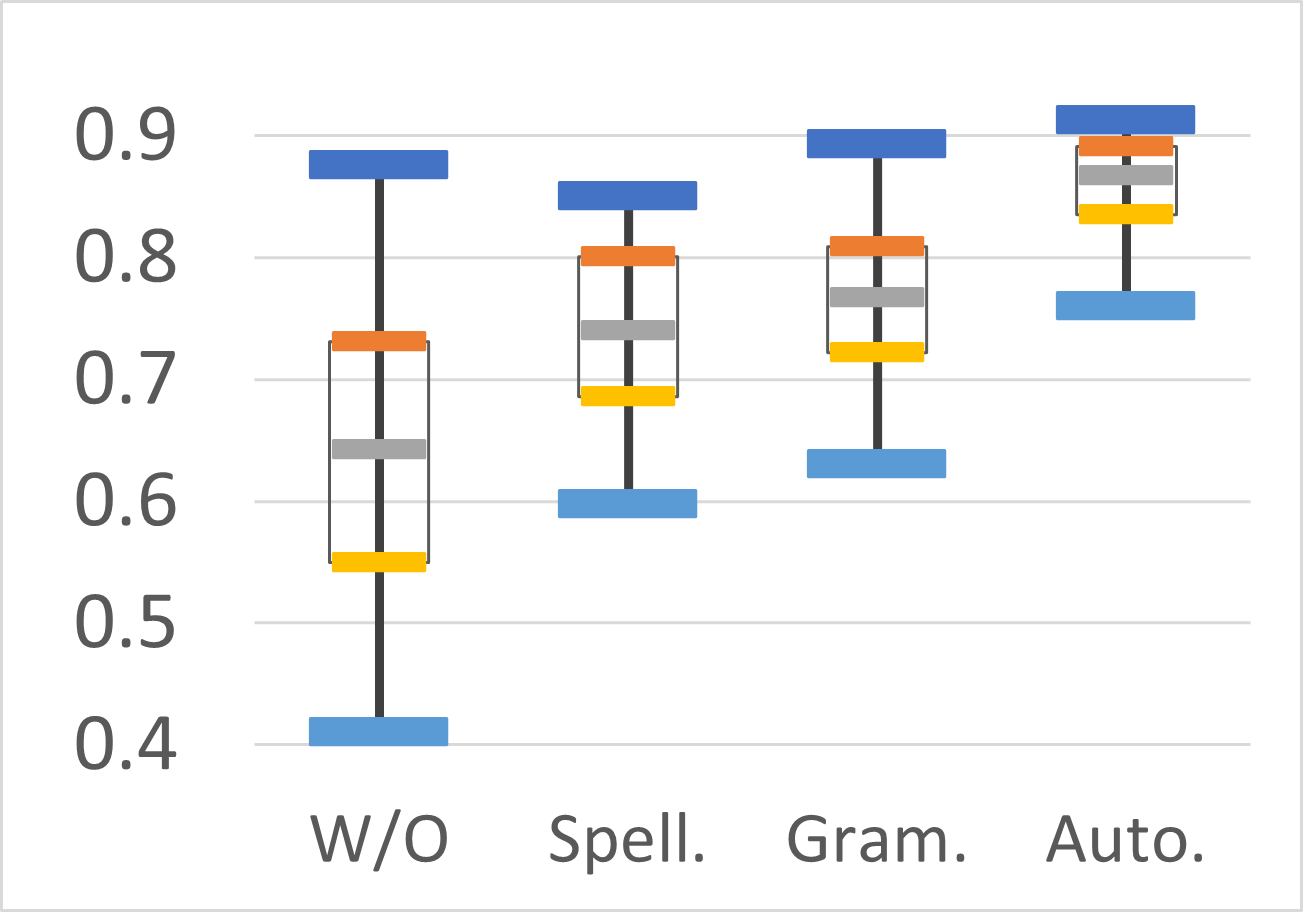}
    \caption{Box plot of the probabilistic robustness with \& without defenders, over 36 randomly selected prompts.}
    \vspace{-0.7cm}
    \label{fig_defence_ranking}
\end{wrapfigure}
In this work, for the first time, we formalise the definition of probabilistic robustness for T2I DMs and then establish an efficient framework, ProTIP, for evaluating it. As a black-box verification method, ProTIP is based on principled statistical inference approaches. It incorporates sequential analysis with early stopping rules in hypothesis testing when identifying AEs, along with adaptive concentration inequalities to dynamically adjust the number of stochastic perturbations needed to make the verification decision. Empirical experiments have substantiated the effectiveness and efficiency of ProTIP. Finally, we demonstrate a use case of ProTIP to rank various commonly used defence methods, highlighting its versatility and applicability.
%\xz{To discuss limitations and future works as well}

\section*{Acknowledgments} This work is supported by the British Academy's Pump Priming Collaboration between UK and EU partners 2024 Programme [PPHE24/100115]. We thank Hongchen Gao for his valuable comments on our work.

% ---- Bibliography ----
%
% BibTeX users should specify bibliography style 'splncs04'.
% References will then be sorted and formatted in the correct style.
%
\bibliographystyle{splncs04}
\bibliography{ref}

\clearpage

\appendix

\section{Denoising Diffusion Probabilistic Models}
\label{sec_app_text2image_and_per_method}

Denoising Diffusion Probabilistic Models (DDPMs)\cite{ho2020denoising} \cite{koller2009probabilistic} \cite{sohl2015deep} are a class of probabilistic generative models that apply a noise injection process, followed by a reverse procedure for sample generation. A DDPM is defined as two parameterized Markov chains: a forward chain that add random Gaussian noise to images to transform data distribution into a simple prior distribution and a reverse chain that convert the noised image back into target data by learning transition kernels parameterized by deep neural networks. 

\noindent\textbf{Forward diffusion process}: Given a data point sampled from a real data distribution $x_0 \sim q(x)$, a forward process begins with adding a small amount of Gaussian noise to the sample in $T$ steps, producing a sequence of noisy samples $x_1, \ldots, x_T$. The step sizes are controlled by a variance schedule $\{\beta_t \in (0,1)\}_{t=1}^T$.
\begin{equation}
\begin{aligned}
q(x_{1:T} | x_0) &:= \prod_{t=1}^T q(x_t | x_{t-1}), \\
q(x_t | x_{t-1}) &= \mathcal{N}\left(x_t; (1 - \beta_t) x_{t-1}, \beta_t I\right).
\end{aligned}
\end{equation}
The data sample $x_0$ gradually loses its distinguishable features as the step $t$ becomes larger. Eventually, when $T \to \infty$, $x_T$ is equivalent to an isotropic Gaussian distribution.

\noindent\textbf{Reverse diffusion process}:
The reverse process starts by first generating an unstructured noise vector from the prior distribution, then gradually removing noise by running a learnable Markov chain in the reverse time direction. Specifically, the reverse Markov chain is parameterized by a prior distribution $p(x_T) = \mathcal{N}(x_T; 0, I)$ and a learnable transition kernel $p_\theta(x_{t-1} | x_t)$. Therefore, we need to learn a model $p_\theta$ to approximate these conditional probabilities in order to run the reverse diffusion process.
\begin{equation}
\begin{aligned}
p_\theta(x_{0:T}) &= p(x_T) \prod_{t=1}^T p_\theta(x_{t-1} | x_t), \\
p_\theta(x_{t-1} | x_t) &= \mathcal{N}\left(x_{t-1}; \mu_\theta(x_t, t), \Sigma_\theta(x_t, t)\right),
\end{aligned}
\end{equation}
where $\theta$ denotes model parameters, often instantiated by architectures like UNet, which parameterize the mean $\mu_\theta(x_t, t)$ and variance $\Sigma_\theta(x_t, t)$. The UNet takes the noised data $x_t$ and time step $t$ as inputs and outputs the parameters of the normal distribution, thereby predicting the noise $\epsilon_\theta$ that the model needs to reverse the diffusion process. With this reverse Markov chain, we can generate a data sample $x_0$ by first sampling a noise vector $x_T \sim p(x_T)$, then successively sampling from the learnable transition kernel $x_{t-1} \sim p_\theta(x_{t-1} | x_t)$ until $t = 1$.

\section{Stochastic Perturbation Generation}
\label{sec_app_Stochastic_Perturbation_Generation}

\noindent\textbf{CLIP Score} We summarise the following reasons for choosing CLIP Score to measure the correlation between a generated caption for an image and the actual content of the image.
\begin{itemize}
    \item 1. While CLIP score is certainly not perfect as a metric (like other metrics) to mimic human-perception similarity, but some works show that CLIP is generally reliable and highly correlated with human judgement in user experiments, e.g.,\cite{hessel-etal-2021-clipscore}. Both CLIP scores are calculated (and thus anchored) by considering the text information, regardless of variations in image styles and backgrounds for additional evidence. Therefore, they are highly likely to be similar.
    \item 2. While the best metric for measuring T2I correctness remains an open question, CLIP score is commonly used in studying T2I robustness \cite{gao2023evaluating,zhuang2023pilot,zhang2024revealing, liu2023riatig, struppek2023rickrolling, zhai2023text}
    \item  3. While CLIP score is a building block of ProTIP, 
it can be substituted with other correctness metrics. The statistical models---the theoretical core and main contribution of ProTIP---are easily \textit{adaptable} to any new correctness metrics as future advancements.
\end{itemize}

\noindent\textbf{Stochastic Perturbation Method} Table \ref{tab_perturbations} for examples of stochastic text perturbations on T2I DM inputs.
\begin{table}[!h]
    \centering
    \caption{Examples of stochastic text perturbations}
    \label{tab_perturbations}
    \resizebox{\textwidth}{!}{%
    \begin{tabular}{p{2cm} p{6cm} p{6cm}}
    \toprule \toprule
        Perturbation & Description & Example \\ \midrule
        Insert & Insert a character randomly  & A white \textcolor{red}{daog} plays with a red ball on the green grass. \\ 
        Substitute & Substitute a character randomly & A white dog plays with a \textcolor{red}{rad} ball on the green grass.\\ 
        Swap & Swap two characters randomly & A white dog plays with a red ball on the green \textcolor{red}{garss}. \\ 
        Delete & Delete a character randomly & A white dog plays with a red \textcolor{red}{bll} on the green grass. \\ 
        Keyboard & Substitute a char.
        by keyboard distance  & A \textcolor{red}{whote} dog plays with a red ball on the green grass. \\ 
        \bottomrule
    \end{tabular}
    }
\end{table}

\section{Hoeffding's Inequality}
\label{sec_app_hi}

Reusing the notations in the main paper, let $I_1,\dots,I_n$ be i.i.d. samples drawn from a population, and \(R\) denotes the population mean to be estimated. Let $\hat{\mu}_I^{(n)}=\frac{1}{n}\sum_{i=1}^{n} I_i$, i.e., the sample mean, we have the following Hoeffding's Inequality \cite{hoeffding1994probability}.
\begin{theorem}[Hoeffding's Inequality]
\label{thm_hi}
\begin{equation}
\begin{aligned}
Pr \left( \left| \hat{\mu}_I^{(n)} - R \right| \leq \varepsilon \right) & > 1 - \sigma 
\end{aligned}
\end{equation}
where $\sigma = 2e^{-2n\varepsilon^2}$, equivalently, $\varepsilon = \sqrt{\frac{\log(2/\sigma)}{2n}}$. 
%\(E{(n)}\) is the number of times when \(I_S{(x)} = 1\) in \(n\) random perturbations. 
\end{theorem}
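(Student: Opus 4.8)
The plan is to follow the classical Chernoff-bounding route, treating the two tails separately and combining them by a union bound. First I would center the variables: set $Z_i := I_i - R$, so $\mathbb{E}[Z_i]=0$ and, since each $I_i$ takes values in $[0,1]$ (indeed a binary $0$--$1$ value in our setting), each $Z_i$ is supported on an interval of length $1$. The event $\hat{\mu}_I^{(n)} - R \ge \varepsilon$ is exactly $\sum_{i=1}^n Z_i \ge n\varepsilon$. Applying Markov's inequality to $e^{s\sum_i Z_i}$ for an arbitrary $s>0$, and using independence to factor the moment generating function, gives
\begin{equation}
Pr\!\left(\sum_{i=1}^n Z_i \ge n\varepsilon\right) \le e^{-s n \varepsilon}\prod_{i=1}^n \mathbb{E}\!\left[e^{sZ_i}\right].
\end{equation}

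The key ingredient is then Hoeffding's lemma: for a zero-mean random variable $Z$ supported on an interval of length $1$, one has $\mathbb{E}[e^{sZ}]\le e^{s^2/8}$. I would prove it by using convexity of $t\mapsto e^{st}$ to bound $e^{sZ}$ by the chord through the endpoints of its support, and then analysing the resulting log-moment-generating function $\psi(s)$: a short computation shows $\psi(0)=\psi'(0)=0$, while $\psi''(s)$ can be written as the variance of an auxiliary exponentially tilted random variable supported on an interval of length $1$, hence $\psi''(s)\le 1/4$. A second-order Taylor expansion then yields $\psi(s)\le s^2/8$. Substituting into the product bound gives $Pr(\sum_i Z_i \ge n\varepsilon) \le e^{-sn\varepsilon + ns^2/8}$, and optimising the free parameter ($s = 4\varepsilon$, which minimises the exponent) produces the one-sided bound $Pr(\hat{\mu}_I^{(n)} - R \ge \varepsilon) \le e^{-2n\varepsilon^2}$.

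Finally I would repeat the identical argument with $-Z_i$ in place of $Z_i$ to control the lower tail $Pr(\hat{\mu}_I^{(n)} - R \le -\varepsilon)$ by the same quantity $e^{-2n\varepsilon^2}$, and a union bound over the two tails gives $Pr(|\hat{\mu}_I^{(n)} - R| \ge \varepsilon) \le 2e^{-2n\varepsilon^2}$, equivalently $Pr(|\hat{\mu}_I^{(n)} - R| \le \varepsilon) > 1 - 2e^{-2n\varepsilon^2}$. Setting $\sigma = 2e^{-2n\varepsilon^2}$ and inverting gives the stated equivalent form $\varepsilon = \sqrt{\log(2/\sigma)/(2n)}$. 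The only non-routine step is Hoeffding's lemma; everything else is bookkeeping around the Chernoff bound. I expect the main obstacle to be the clean justification of $\psi''(s)\le 1/4$ — recognising $\psi''(s)$ as a variance and bounding the variance of a $[0,1]$-valued (more generally, length-$1$-range) random variable by $1/4$ is where the real content sits; the rest follows mechanically.
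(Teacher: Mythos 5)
Your proof is correct and is the canonical Chernoff-bound argument for Hoeffding's inequality; the paper itself offers no proof of this theorem, deferring entirely to the cited reference, and your derivation (centering, Hoeffding's lemma via the variance bound $\psi''(s)\le 1/4$, optimising at $s=4\varepsilon$, then a union bound over the two tails) is precisely the argument of that source, so there is nothing to compare against and nothing missing. The only cosmetic mismatch is that the rigorous conclusion is the non-strict bound $Pr\left(\left|\hat{\mu}_I^{(n)} - R\right|\le\varepsilon\right)\ge 1-2e^{-2n\varepsilon^2}$, whereas the theorem as printed asserts a strict inequality --- an imprecision in the paper's statement rather than a gap in your argument.
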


From the two Theorems \ref{thm_hi} and \ref{thm_adp_hoeff} regarding the original and adaptive Hoeffding's Inequality respectively, we may derive the following two corollaries.
\begin{corollary}[Tightness of the two bounds]
\label{col_hi_tightness}
Give a confidence level $1-\sigma$ and the same number of samples $n$, the estimation error (between the sample mean and population mean) $\varepsilon$ derived from the original Hoeffding's Inequality is always smaller than the adaptive Hoeffding's Inequality.
\end{corollary}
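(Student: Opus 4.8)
The claim is a clean comparison between the error term $\varepsilon_{\mathrm{orig}}(n,\sigma)=\sqrt{\log(2/\sigma)/(2n)}$ from Theorem~\ref{thm_hi} and the error term $\varepsilon_{\mathrm{adp}}(n,\sigma)=\sqrt{\bigl(0.6\cdot\log(\log_{1.1}n+1)+1.8^{-1}\cdot\log(24/\sigma)\bigr)/n}$ from Theorem~\ref{thm_adp_hoeff}. Since both are square roots of nonnegative quantities over the same $n$, it suffices to compare the numerators under the radicals. The plan is therefore to show, for every $n\ge 1$ and every $\sigma\in(0,1)$,
\begin{equation}
\label{eq_plan_goal}
\tfrac{1}{2}\log(2/\sigma) \;<\; 0.6\cdot\log\bigl(\log_{1.1}n+1\bigr) + 1.8^{-1}\cdot\log(24/\sigma),
\end{equation}
after which taking square roots and dividing by $\sqrt{n}$ gives $\varepsilon_{\mathrm{orig}}<\varepsilon_{\mathrm{adp}}$.

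The first step is to split \eqref{eq_plan_goal} into the $\sigma$-dependent part and the $n$-dependent part. Rewrite $\log(24/\sigma)=\log 24+\log(1/\sigma)$ and $\log(2/\sigma)=\log 2+\log(1/\sigma)$. The coefficient of $\log(1/\sigma)$ on the left is $1/2$ and on the right is $1/1.8=5/9>1/2$, so the $\sigma$-dependent contribution on the right already dominates that on the left by a margin of $(5/9-1/2)\log(1/\sigma)\ge 0$ (using $\sigma<1$). Hence it is enough to verify the residual constant inequality
\begin{equation}
\label{eq_plan_residual}
\tfrac{1}{2}\log 2 \;<\; 0.6\cdot\log\bigl(\log_{1.1}n+1\bigr) + \tfrac{1}{1.8}\log 24,
\end{equation}
and since the first term on the right is minimized at $n=1$, where $\log_{1.1}1=0$ and $\log(0+1)=0$, the worst case reduces to checking $\tfrac12\log 2 < \tfrac{1}{1.8}\log 24$, i.e. $0.3466<1.7636$, which holds comfortably. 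For $n\ge 2$ the right-hand side only increases, so \eqref{eq_plan_residual} holds for all $n$, and therefore \eqref{eq_plan_goal} holds for all $n\ge1$ and $\sigma\in(0,1)$.

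The only mild subtlety — and the step I would be most careful about — is the edge behaviour at $n=1$: one must confirm that $\log_{1.1}n+1$ is well-defined and positive there (it equals $1$, so its log is $0$, not a problem), and that the strict inequality is not lost in this degenerate case; the numerical slack $1.7636$ versus $0.3466$ shows it is not. A secondary point is to make sure the comparison of radicands is legitimate, i.e. that both radicands are genuinely nonnegative for all $n\ge1$ (immediate, since $\log(\log_{1.1}n+1)\ge0$ and $\log(24/\sigma)>0$), so that $t\mapsto\sqrt{t}$ is monotone on the relevant range and the inequality transfers. With these observations the corollary follows; I would present \eqref{eq_plan_goal}, the coefficient comparison on $\log(1/\sigma)$, and the $n=1$ constant check as the three displayed steps.
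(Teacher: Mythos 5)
Your proposal is correct and follows essentially the same route as the paper: both reduce the claim to a comparison of the radicands, discard the nonnegative term $0.6\cdot\log(\log_{1.1}n+1)$ (minimized at $n=1$), and verify the remaining $\sigma$-only inequality --- you via a coefficient comparison on $\log(1/\sigma)$ plus a constant check, the paper via the reduction to $\log(2/\sigma)<\log(24/\sigma)$. Your explicit handling of the $n=1$ edge case is in fact slightly more careful than the paper's, which asserts $\log_{1.1}n>0$ there.
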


\begin{proof}
As Theorems \ref{thm_hi} and \ref{thm_adp_hoeff}, the estimation error $\varepsilon$ from the original and adaptive Hoeffding's Inequality are $\sqrt{\frac{\log(2/\sigma)}{2n}}$ and $\sqrt{\frac{0.6 \cdot \log(\log_{1.1}n+1) + 1.8^{-1} \cdot \log\left(24/{\sigma}\right)}{n}}$, respectively. We may prove the former is smaller than the latter either analytically or empirically. The inequalities of Corollary 1 can be written:

\begin{equation}
\begin{aligned}
\sqrt{\frac{\log(2/\sigma)}{2n}} < \sqrt{\frac{0.6 \cdot \log(\log_{1.1}n+1) + 1.8^{-1} \cdot \log\left(24/{\sigma}\right)}{n}} 
\end{aligned}
\end{equation}
Where $n\geq 1 \text{ and } \sigma \in (0, 1) $.
\begin{subequations}
\begin{align}
\frac{\log\left(2/\sigma\right)}{2n} &< \frac{0.6 \cdot \log\left(\log_{1.1} n + 1\right) + 1.8^{-1} \cdot \log\left(24/{\sigma}\right)}{n} \label{eq:line1} \\
\log(2/\sigma) &< 1.2 \cdot \log\left(\log_{1.1} n + 1\right) + \frac{10}{9} \cdot \log\left(24/\sigma\right) \label{eq:line2} \\
\log\left(2/\sigma\right) &< \log\left[\left(\log_{1.1}n + 1\right)^{1.2} \cdot \left(24/\sigma\right)^{\frac{10}{9}}\right] \label{eq:line3}
\end{align}
\end{subequations}
Because $n\geq 1$, hence, $\log_{1.1} n > 0$, $\log_{1.1} n + 1 > 1$, and $\left(\log_{1.1} n + 1\right)^{1.2} > 1$, the right side of  \eqref{eq:line3} is.
\begin{equation}
\begin{aligned}
\log\left[\left(\log_{1.1}n + 1\right)^{1.2} \cdot \left(24/\sigma\right)^{\frac{10}{9}}\right] > \log\left(24/\sigma\right)^{\frac{10}{9}} > \log\left(24/\sigma\right) 
\end{aligned}
\end{equation}
Hence, \eqref{eq:line3} will be: 
\begin{equation}
\begin{aligned}
\log\left(2/\sigma\right) < \log\left(24/\sigma\right) , \sigma \in (0, 1) 
\end{aligned}
\end{equation}
Therefore the equation holds, and the proof is complete.
\qed
\end{proof}

\begin{corollary}[Monotonicity to $n$ and $\sigma$]
\label{col_monotonicity_bounds}
For both the original and adaptive Hoeffding's Inequalities, the estimation errors (between the sample mean and population mean) $\varepsilon$ are monotonically decreasing to sample size $n$ (for n >= 2) and $\sigma$.
\end{corollary}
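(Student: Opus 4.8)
The plan is to treat the two inequalities and the two variables separately, reducing each case to elementary single-variable calculus on the closed forms in Theorems~\ref{thm_hi} and~\ref{thm_adp_hoeff}. Throughout I work with the radicand $\varepsilon^2$ instead of $\varepsilon$: since $\varepsilon=\sqrt{(\cdot)}$ has a nonnegative radicand and $\sqrt{\cdot}$ is strictly increasing, $\varepsilon$ and $\varepsilon^2$ are monotone in the same direction with respect to each argument, so it suffices to analyse the radicands.

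The $\sigma$-direction is the easy case for both bounds. The parameter $\sigma$ enters the original radicand $\frac{\log(2/\sigma)}{2n}$ and the adaptive radicand $\frac{0.6\log(\log_{1.1}n+1)+1.8^{-1}\log(24/\sigma)}{n}$ only through a term $\log(C/\sigma)=\log C-\log\sigma$ with $C\in\{2,24\}$. Its $\sigma$-derivative is $-1/\sigma<0$, and the remaining ($n$-dependent) part of each radicand is $\sigma$-free, so each radicand---hence each $\varepsilon$---is strictly decreasing in $\sigma$, as claimed. The $n$-direction for the original bound is equally immediate: $\frac{\log(2/\sigma)}{2n}=C'/n$ with $C'=\tfrac12\log(2/\sigma)>0$ (since $\sigma\in(0,1)$ gives $2/\sigma>2$), which is strictly decreasing in $n$.

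The crux, and the main obstacle, is the $n$-direction for the adaptive bound, where the numerator itself grows with $n$ through the iterated logarithm $h(n):=\log(\log_{1.1}n+1)$, so it is not obvious that dividing by $n$ wins. I would write the radicand as $g(n)=\frac{0.6\,h(n)+c}{n}$ with $c:=1.8^{-1}\log(24/\sigma)>0$, extend $n$ to a real variable, and observe that the sign of $g'(n)$ equals the sign of $N(n):=0.6\,n\,h'(n)-0.6\,h(n)-c$. Writing $\log_{1.1}n=\ln n/a$ with $a:=\ln 1.1$, a short differentiation yields the clean identity $n\,h'(n)=\frac{1}{\ln n+a}$, so that $N(n)=0.6\bigl(\frac{1}{\ln n+a}-h(n)\bigr)-c$. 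It then suffices to show $\frac{1}{\ln n+a}<h(n)$ for $n\ge 2$, since the $-c$ term only reinforces the negative sign.

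To close this, I would use that on $[2,\infty)$ the term $\frac{1}{\ln n+a}$ is decreasing while $h(n)$ is increasing, so both are extremal at the endpoint $n=2$: there $\frac{1}{\ln 2+a}\approx 1.27$ whereas $h(2)=\ln(\ln 2/a+1)\approx 2.11>1.27$, giving $N(n)\le N(2)<0$ for all $n\ge 2$. Hence $g$ is strictly decreasing on $[2,\infty)$ and so is $\varepsilon$. I expect the only delicate point to be this endpoint numerical separation---which is precisely where the hypothesis $n\ge2$ is needed, since for $n$ near $1$ the term $h(n)$ is too small (indeed $h(1)=0$) and $N$ is positive there; everything else is simply the sign of an elementary derivative.
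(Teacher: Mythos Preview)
Your proof is correct and follows the same broad outline as the paper---reduce to the radicand and take partial derivatives in each variable---but the one non-trivial case (monotonicity of the adaptive bound in $n$) is handled differently, and in fact more carefully. The paper splits the adaptive radicand as $f_1(n)+f_2(n,\sigma)$ with $f_1(n)=0.6\log(\log_{1.1}n+1)/n$, observes that the numerator's derivative satisfies $f'(n)\le f'(2)\approx 0.165<1$ (the denominator's derivative), and concludes from this that the quotient $f_1$ is decreasing. That implication is not valid in general (e.g.\ $f(n)=0.1n-1$ has $f'=0.1<1$ yet $f(n)/n$ increases), so the paper's step reaches the right conclusion without actually justifying it. Your quotient-rule reduction to the sign of $n\,h'(n)-h(n)$, together with the identity $n\,h'(n)=1/(\ln n+a)$ and the endpoint comparison at $n=2$ (one side decreasing, the other increasing), is the logically complete version of that step and also makes transparent why the hypothesis $n\ge 2$ is needed. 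One small remark: your identity and the numerical value $h(2)\approx 2.11$ presuppose $\log=\ln$, whereas the paper's own derivative computations carry an extra $1/\ln 10$; since Theorem~\ref{thm_hi} obtains $\varepsilon$ from $\sigma=2e^{-2n\varepsilon^2}$, natural logarithm is indeed the intended base, but it would be worth saying so explicitly.
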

\begin{proof}
By taking the (partial) derivatives of the two analytical expressions of $\varepsilon$ r.w.t. $n$ and $\sigma$ respectively, we may establish negative results as what follows.
\begin{enumerate}
    \item $\sqrt{\frac{\log(2/\sigma)}{2n}}$ partial derivative with respect to \( n \): 
    \begin{align*}
    \frac{\partial \varepsilon(n, \sigma)}{\partial n} &= \frac{\partial}{\partial n} \left( \sqrt{\frac{\log(2/\sigma)}{2n}} \right) \\
    &= \frac{1}{2} \left( \frac{\log(2/\sigma)}{2n} \right)^{-\frac{1}{2}} \cdot \frac{\partial}{\partial n} \left( \frac{\log(2/\sigma)}{2n} \right) \\
    &= \frac{1}{2\sqrt{\frac{\log(2/\sigma)}{2n}}} \cdot \frac{\partial}{\partial n} \left( \frac{\log(2/\sigma)}{2n} \right) \\
    &= \frac{1}{2\sqrt{\frac{\log(2/\sigma)}{2n}}} \cdot \left( -\frac{\log(2/\sigma)}{2n^2} \right) \\
    &= -\frac{\log(2/\sigma)}{4n^2 \sqrt{\frac{\log(2/\sigma)}{2n}}}
    \end{align*}
    
    \item $\sqrt{\frac{\log(2/\sigma)}{2n}}$ partial derivative with respect to \( \sigma \):
    \begin{align*}
    \frac{\partial \varepsilon(n, \sigma)}{\partial \sigma} &= \frac{\partial}{\partial \sigma} \left( \sqrt{\frac{\log(2/\sigma)}{2n}} \right) \\
    &= \frac{1}{2} \left( \frac{\log(2/\sigma)}{2n} \right)^{-\frac{1}{2}} \cdot \frac{\partial}{\partial \sigma} \left( \frac{\log(2/\sigma)}{2n} \right) \\
    &= \frac{1}{2\sqrt{\frac{\log(2/\sigma)}{2n}}} \cdot \frac{\partial}{\partial \sigma} \left( \frac{\log(2/\sigma)}{2n} \right) \\
    &= \frac{1}{2\sqrt{\frac{\log(2/\sigma)}{2n}}} \cdot \left( -\frac{1}{2n\sigma \ln{10}} \right) \\
    &= -\frac{1}{4n\sigma \ln{10} \sqrt{\frac{\log(2/\sigma)}{2n}} }
    \end{align*}

    \item $\sqrt{\frac{0.6 \cdot \log(\log_{1.1}n+1) + 1.8^{-1} \cdot \log\left(24/{\sigma}\right)}{n}}$ partial derivative with respect to \( \sigma \):
    \begin{align*}
    \begin{split}
    \frac{\partial \varepsilon(n, \sigma)}{\partial \sigma} &= \frac{\partial}{\partial \sigma} \left( \sqrt{\frac{0.6 \cdot \log(\log_{1.1}n+1) + 1.8^{-1} \cdot \log\left(24/{\sigma}\right)}{n}} \right) \\
    &= \frac{1}{2} \left( \frac{0.6 \cdot \log(\log_{1.1}n+1) + 1.8^{-1} \cdot \log\left(24/{\sigma}\right)}{n} \right)^{-\frac{1}{2}} \\
    &\quad \times \frac{\partial}{\partial \sigma} \left( \frac{0.6 \cdot \log(\log_{1.1}n+1) + 1.8^{-1} \cdot \log\left(24/{\sigma}\right)}{n} \right) \\
    &= \frac{1}{2 \sqrt{\frac{0.6 \cdot \log(\log_{1.1}n+1) + 1.8^{-1} \cdot \log\left(24/{\sigma}\right)}{n}}} \\
    &\quad \times \frac{\partial}{\partial \sigma} \left( \frac{0.6 \cdot \log(\log_{1.1}n+1) + 1.8^{-1} \cdot \log\left(24/{\sigma}\right)}{n} \right) \\
    &= \frac{1}{2 \sqrt{\frac{0.6 \cdot \log(\log_{1.1}n+1) + 1.8^{-1} \cdot \log\left(24/{\sigma}\right)}{n}}} \\
    &\quad \times \left( -\frac{1}{1.8 \sigma n \ln{10}} \right) \\
    &= -\frac{1}{3.6 \sigma n \ln{10} \sqrt{\frac{0.6 \cdot \log(\log_{1.1}n+1) + 1.8^{-1} \cdot \log\left(24/{\sigma}\right)}{n}}}
    \end{split}
    \end{align*}

Therefore, the partial derivatives with respect to \( n \) and \( \sigma \) are both less than zero.

    \item $\sqrt{\frac{0.6 \cdot \log(\log_{1.1}n+1) + 1.8^{-1} \cdot \log\left(24/{\sigma}\right)}{n}}$ 
under the square root, we have $ f_1(n) + f_2(n, \sigma) $:
\[ f_1(n) = \frac{0.6 \cdot \log(\log_{1.1}n + 1)}{n} \]

\[ f_2(n, \sigma) = \frac{1.8^{-1} \cdot \log\left(\frac{24}{\sigma}\right)}{n} \]
$f_2(n, \sigma)$ partial derivative with respect to \( n \):
\begin{align*}
\frac{\partial}{\partial n} f_2(n, \sigma) &= \frac{\partial}{\partial n} \left( \frac{1.8^{-1} \cdot \log\left(\frac{24}{\sigma}\right)}{n} \right) \\
&= \frac{-1.8^{-1} \cdot \log\left(\frac{24}{\sigma}\right)}{n^2}
\end{align*}
Therefore, $f_2(n, \sigma)$ is monotonically decreasing.

The derivative of the denominator of \(f_1(n)\) is 1. For the numerator \(f(n) = 0.6 \cdot \log(\log_{1.1}n + 1)\),

\begin{align*}
f'(n) &= \frac{d}{dn} \left( 0.6 \cdot \log(\log_{1.1}n + 1) \right) \\
&= 0.6 \cdot \frac{1}{\log(1.1) \cdot (\log_{1.1} n + 1)} \cdot \frac{1}{n} \cdot \frac{1}{\ln(10)}
\end{align*}

$f'(n)$ is monotonically decreasing. When $n \geq 2$, $f'(n) \leq f'(2) = 0.165$.
Therefore, the derivative of the numerator is smaller than that of the denominator, and $f_1(n)$ is monotonically decreasing.
\end{enumerate}
\qed
\end{proof}

\section{Sequential Analysis Parameter Design}
\label{sec_app_r_package}
% \section*{Design Parameters and Output of Group Sequential Design}

\subsection{Design parameters and output of group sequential design}

% \textbf{User Defined Parameters}
\begin{itemize}
    \item Type of design: Pocock type alpha spending
    \item Information rates: 0.200, 0.400, 0.600, 0.800, 1.000
    \item Significance level: 0.0500
    \item Type II error rate: 0.3000
    \item Type of beta spending: Pocock type beta spending
\end{itemize}

\textbf{Derived from User Defined Parameters}
\begin{itemize}
    \item Maximum number of stages: 5
    \item Stages: 1, 2, 3, 4, 5
\end{itemize}

\textbf{Default Parameters}
\begin{itemize}
    \item Two-sided power: FALSE
    \item Binding futility: FALSE
    \item Test: one-sided
    \item Tolerance: 1e-08
\end{itemize}

\textbf{Output}
\begin{itemize}
    \item Power: 0.1655, 0.3637, 0.5316, 0.6452, 0.7000
    \item Futility bounds (non-binding): -0.145, 0.511, 1.027, 1.497
    \item Cumulative alpha spending: 0.01477, 0.02616, 0.03543, 0.04324, 0.05000
    \item Cumulative beta spending: 0.08862, 0.15694, 0.21255, 0.25945, 0.30000
    \item Critical values: 2.176, 2.144, 2.113, 2.090, 2.071
    \item Stage levels (one-sided): 0.01477, 0.01603, 0.01729, 0.01833, 0.01918
\end{itemize}

\textbf{Group Sequential Design Characteristics}
\begin{itemize}
    \item Number of subjects fixed: 4.7057
    \item Shift: 7.2491
    \item Inflation factor: 1.5405
    \item Informations: 1.450, 2.900, 4.349, 5.799, 7.249
    \item Power: 0.1655, 0.3637, 0.5316, 0.6452, 0.7000
    \item Rejection probabilities under H1 : 0.16549, 0.19825, 0.16786, 0.11361, 0.05478
    \item Futility probabilities under H1 : 0.08862, 0.06832, 0.05561, 0.04690
    \item Ratio expected vs fixed sample size under H1 : 0.7938
    \item Ratio expected vs fixed sample size under a value between H0 and H1 : 0.7776
    \item Ratio expected vs fixed sample size under H0 : 0.5869
\end{itemize}

\subsection{Sample Size Calculation for a Continuous Endpoint}
Sequential analysis with a maximum of 5 looks (group sequential design), overall significance level 5\% (one-sided). The sample size was calculated for a two-sample t-test, $H_0: \mu(1) - \mu(2) = 0$, $H_1: \text{effect} = 0.5$, standard deviation = 1, power 70\%.

\begin{table}[ht!]
    \centering
    \caption{Group Sequential Design Characteristics}
    \begin{tabular}{lccccc}
        \toprule
        & \textbf{Stage 1} & \textbf{Stage 2} & \textbf{Stage 3} & \textbf{Stage 4} & \textbf{Stage 5} \\
        \midrule
        Information rate & 20\% & 40\% & 60\% & 80\% & 100\% \\
        Efficacy boundary (z-value scale) & 2.176 & 2.144 & 2.113 & 2.090 & 2.071 \\
        Futility boundary (z-value scale) & -0.145 & 0.511 & 1.027 & 1.497 & - \\
        Overall power & 0.1655 & 0.3637 & 0.5316 & 0.6452 & 0.7000 \\
        Expected number of subjects & \multicolumn{5}{c}{60.9} \\
        Number of subjects & 23.6 & 47.2 & 70.9 & 94.5 & 118.1 \\
        Cumulative alpha spent & 0.0148 & 0.0262 & 0.0354 & 0.0432 & 0.0500 \\
        Cumulative beta spent & 0.0886 & 0.1569 & 0.2126 & 0.2595 & 0.3000 \\
        One-sided local significance level & 0.0148 & 0.0160 & 0.0173 & 0.0183 & 0.0192 \\
        Efficacy boundary (t) & 0.959 & 0.644 & 0.512 & 0.436 & 0.385 \\
        Futility boundary (t) & -0.060 & 0.150 & 0.246 & 0.311 & - \\
        Overall exit probability (under H0) & 0.4570 & 0.2977 & 0.1526 & 0.0688 & - \\
        Overall exit probability (under H1) & 0.2541 & 0.2666 & 0.2235 & 0.1605 & - \\
        Exit probability for efficacy (under H0) & 0.0148 & 0.0113 & 0.0087 & 0.0062 & - \\
        Exit probability for efficacy (under H1) & 0.1655 & 0.1982 & 0.1679 & 0.1136 & - \\
        Exit probability for futility (under H0) & 0.4423 & 0.2864 & 0.1439 & 0.0626 & - \\
        Exit probability for futility (under H1) & 0.0886 & 0.0683 & 0.0556 & 0.0469 & - \\
        \bottomrule
    \end{tabular}
    \label{tab:group_sequential_design}
\end{table}

\subsection*{Legend}
\begin{itemize}
    \item (t): treatment effect scale
\end{itemize}

\subsection{Design Plan Parameters and Output for Means}

\subsection*{Design Parameters}
\begin{itemize}
    \item Information rates: 0.200, 0.400, 0.600, 0.800, 1.000
    \item Critical values: 2.176, 2.144, 2.113, 2.090, 2.071
    \item Futility bounds (non-binding): -0.145, 0.511, 1.027, 1.497
    \item Cumulative alpha spending: 0.01477, 0.02616, 0.03543, 0.04324, 0.05000
    \item Local one-sided significance levels: 0.01477, 0.01603, 0.01729, 0.01833, 0.01918
    \item Significance level: 0.0500
    \item Type II error rate: 0.3000
    \item Test: one-sided
\end{itemize}

\subsection*{User Defined Parameters}
\begin{itemize}
    \item Alternatives: 0.5
\end{itemize}

\subsection*{Default Parameters}
\begin{itemize}
    \item Mean ratio: FALSE
    \item Theta H0: 0
    \item Normal approximation: FALSE
    \item Standard deviation: 1
    \item Treatment groups: 2
    \item Planned allocation ratio: 1
\end{itemize}

\subsection*{Sample Size and Output}
\begin{itemize}
    \item Reject per stage [1]: 0.16549
    \item Reject per stage [2]: 0.19825
    \item Reject per stage [3]: 0.16786
    \item Reject per stage [4]: 0.11361
    \item Reject per stage [5]: 0.05478
    \item Overall futility stop: 0.2595
    \item Futility stop per stage [1]: 0.08862
    \item Futility stop per stage [2]: 0.06832
    \item Futility stop per stage [3]: 0.05561
    \item Futility stop per stage [4]: 0.04690
    \item Early stop: 0.9047
    \item Maximum number of subjects: 118.1
    \item Maximum number of subjects (1): 59.1
    \item Maximum number of subjects (2): 59.1
    \item Number of subjects [1]: 23.6
    \item Number of subjects [2]: 47.2
    \item Number of subjects [3]: 70.9
    \item Number of subjects [4]: 94.5
    \item Number of subjects [5]: 118.1
    \item Expected number of subjects under H0: 45
    \item Expected number of subjects under H0/H1: 59.6
    \item Expected number of subjects under H1: 60.9
    \item Critical values (treatment effect scale) [1]: 0.959
    \item Critical values (treatment effect scale) [2]: 0.644
    \item Critical values (treatment effect scale) [3]: 0.512
    \item Critical values (treatment effect scale) [4]: 0.436
    \item Critical values (treatment effect scale) [5]: 0.385
    \item Futility bounds (treatment effect scale) [1]: -0.0605
    \item Futility bounds (treatment effect scale) [2]: 0.1496
    \item Futility bounds (treatment effect scale) [3]: 0.2457
    \item Futility bounds (treatment effect scale) [4]: 0.3108
    \item Futility bounds (one-sided p-value scale) [1]: 0.55773
    \item Futility bounds (one-sided p-value scale) [2]: 0.30485
    \item Futility bounds (one-sided p-value scale) [3]: 0.15231
    \item Futility bounds (one-sided p-value scale) [4]: 0.06717
\end{itemize}

\subsection*{Legend}
\begin{itemize}
    \item (i): values of treatment arm i
    \item (k): values at stage k
\end{itemize}

\section{More Experimental Results}
\label{sec_app_more_results}

\begin{figure}[tb]
    \centering
    \includegraphics[width=\textwidth]{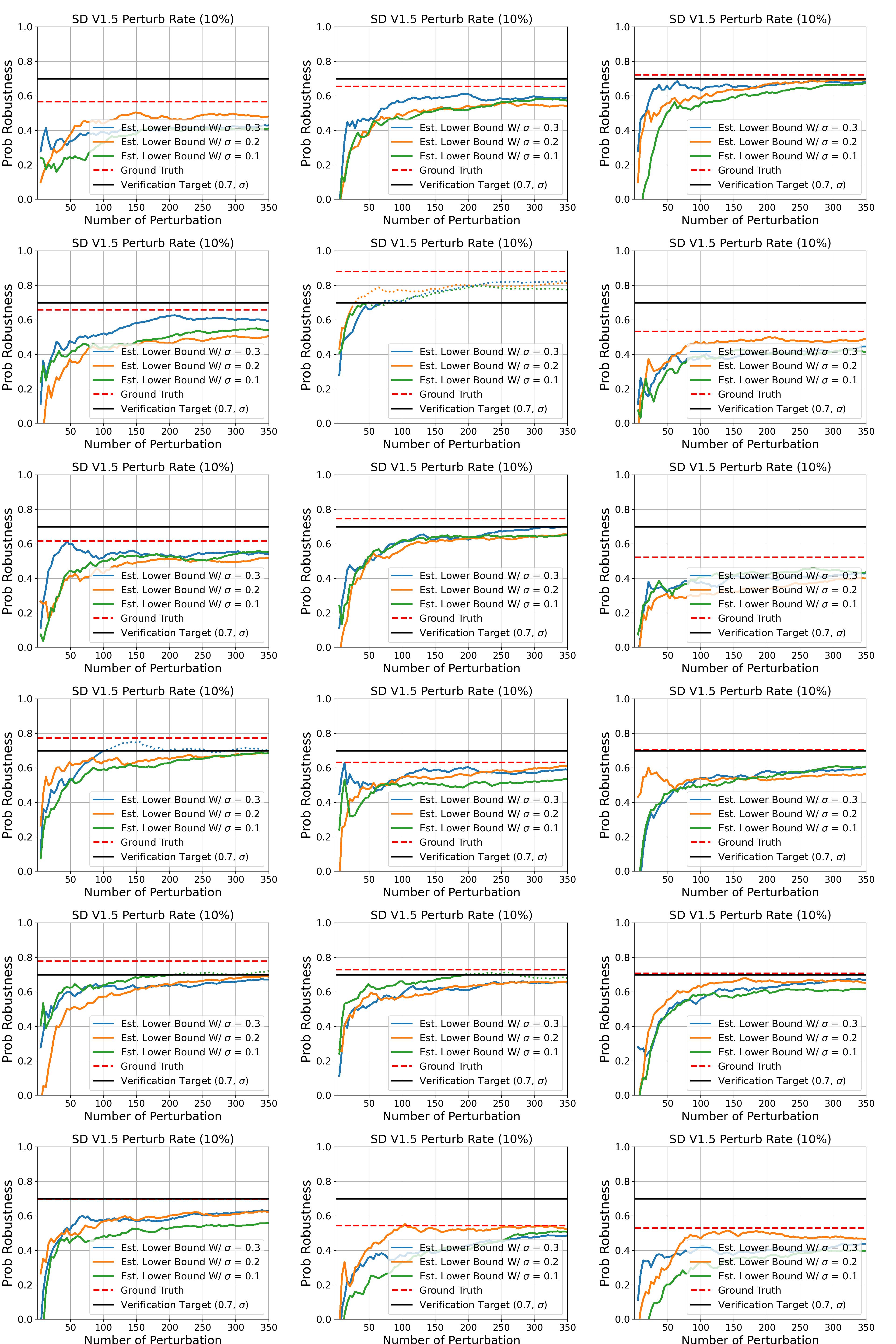}
    \caption{More results on ProTIP effectiveness for SD V1.5 Pert. Rate 10\% (Part 1).}
    %\label{fig_perturbation_part1}
\end{figure}

\begin{figure}[tb]
    \ContinuedFloat
    \centering
    \includegraphics[width=\textwidth]{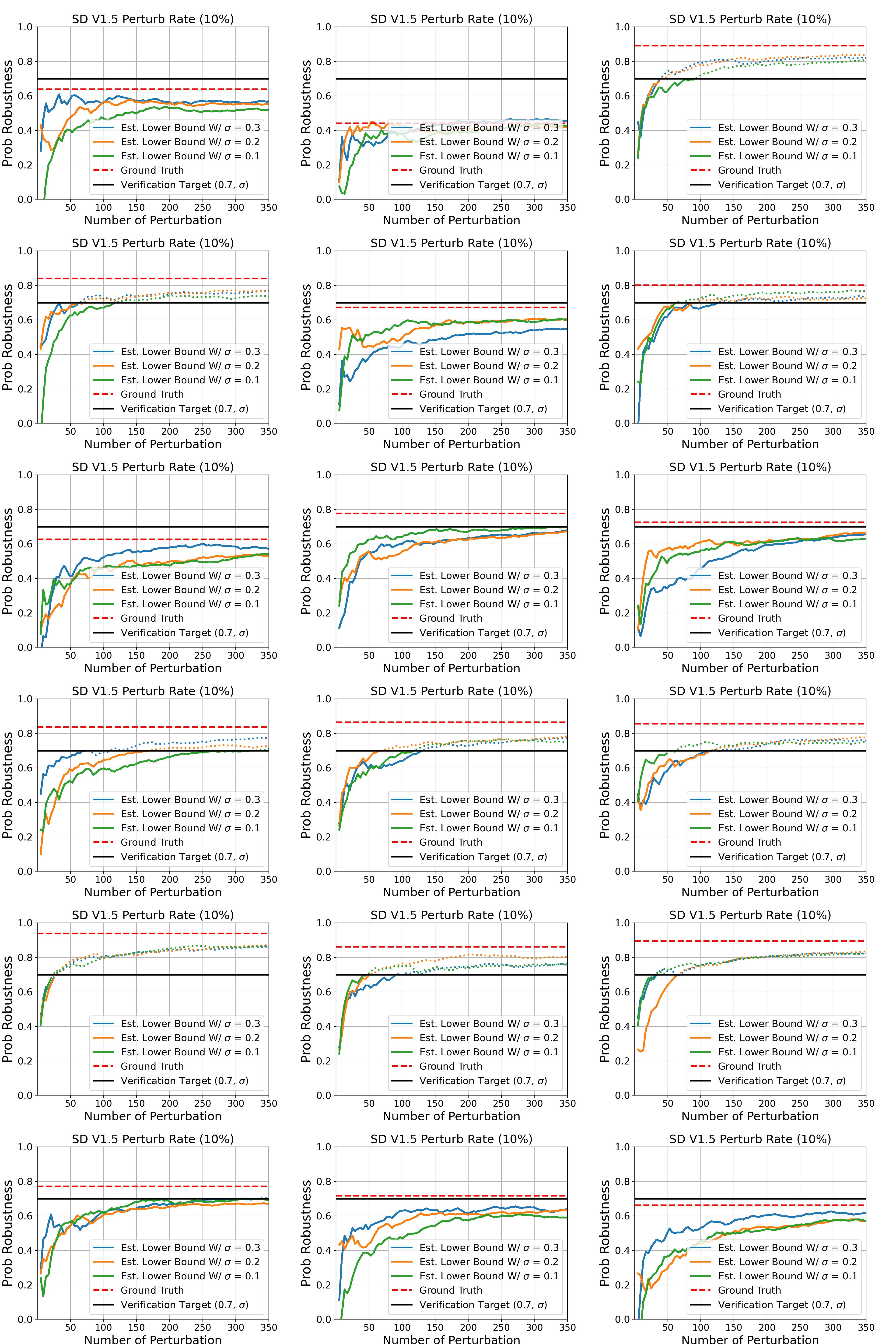}
    \caption{More results on ProTIP effectiveness for SD V1.5 Pert. Rate 10\% (Part 2).}
\end{figure}

\begin{figure}[tb]
    \centering
    \includegraphics[width=\textwidth]{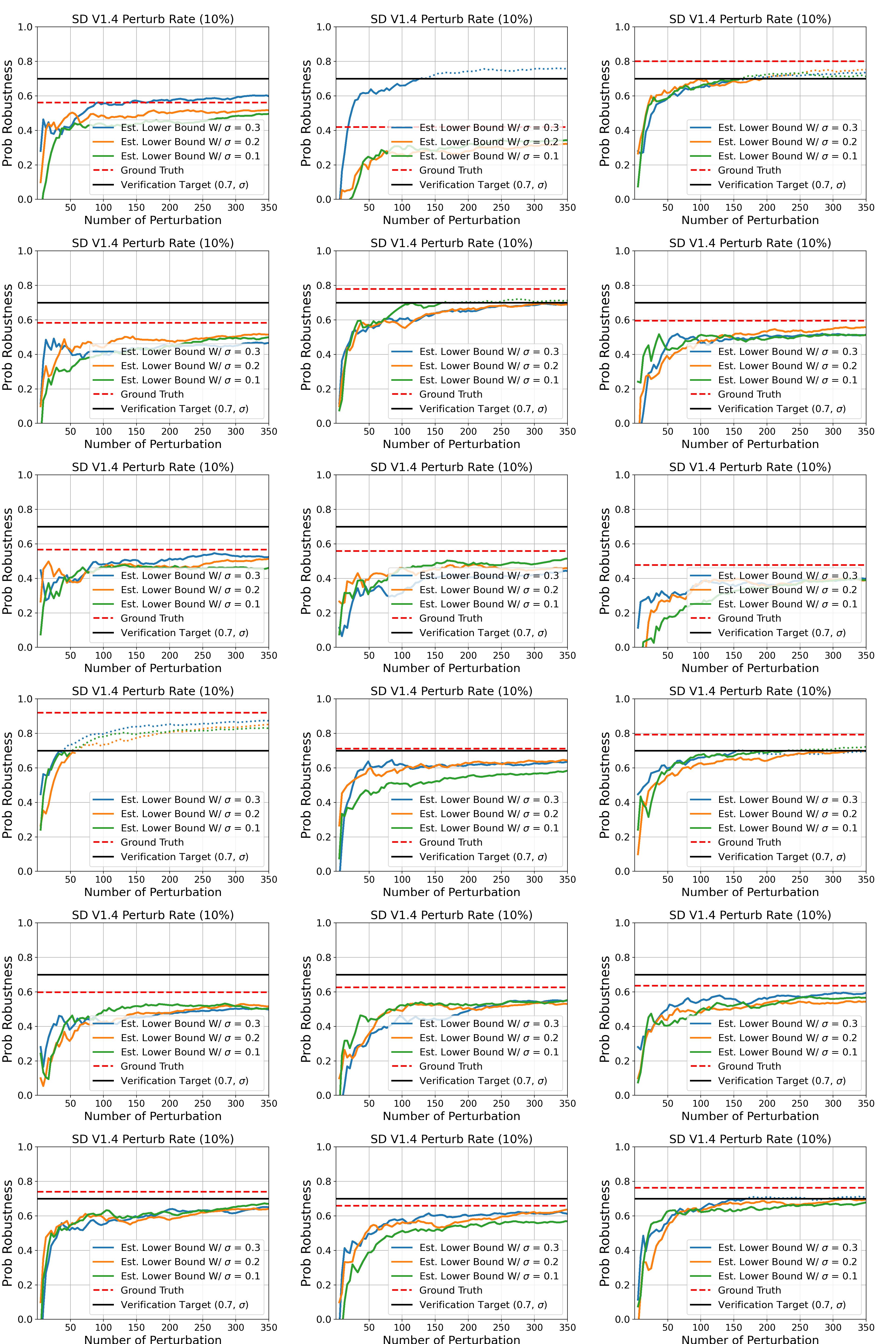}
    \caption{More results on ProTIP effectiveness for SD V1.4 Pert. Rate 10\% (Part 1).}
    %\label{fig_perturbation_part1}
\end{figure}

\begin{figure}[tb]
    \ContinuedFloat
    \centering
    \includegraphics[width=\textwidth]{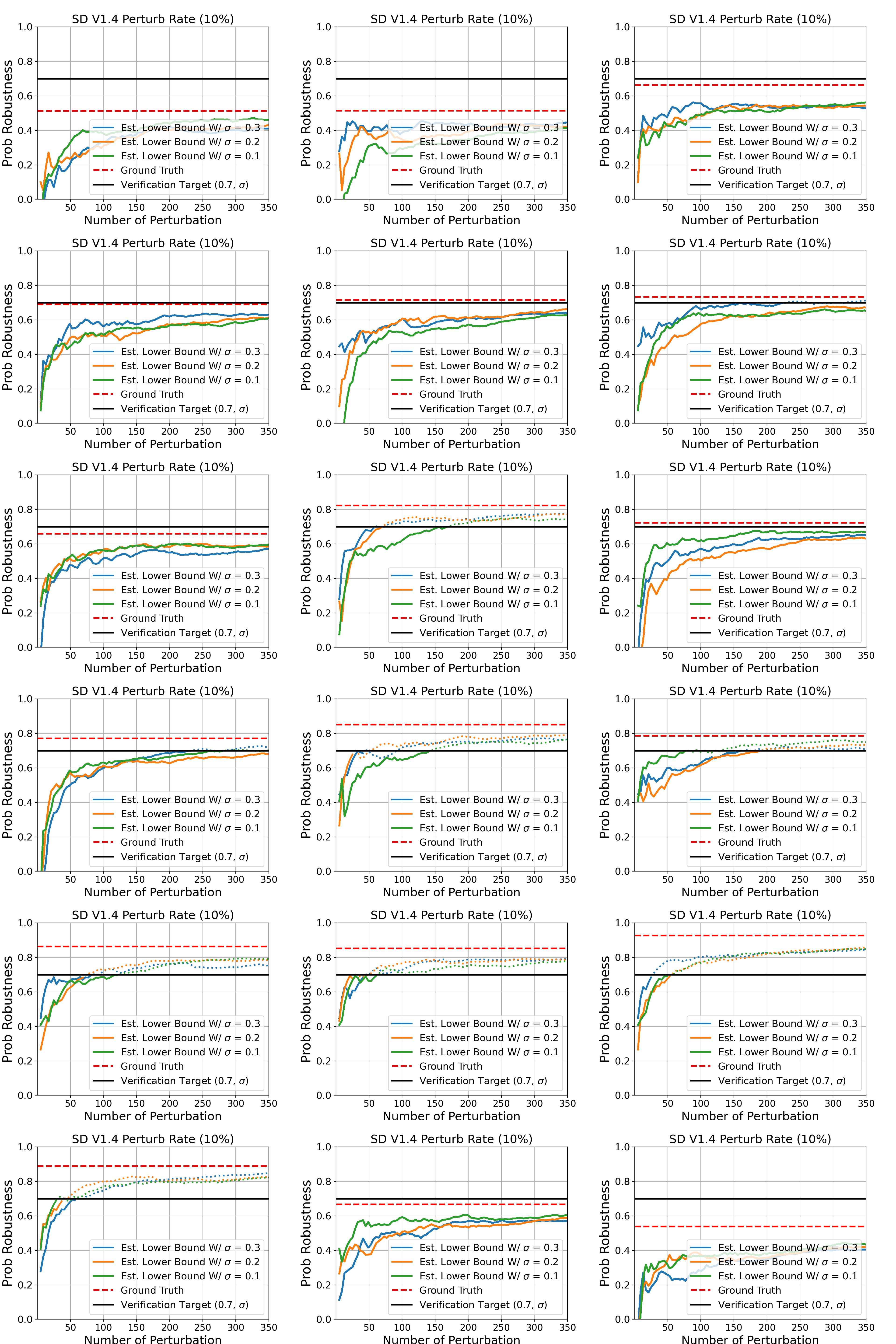}
    \caption{More results on ProTIP effectiveness for SD V1.4 Pert. Rate 10\% (Part 2).}
\end{figure}

\begin{figure}[tb]
    \centering
    \includegraphics[width=\textwidth]{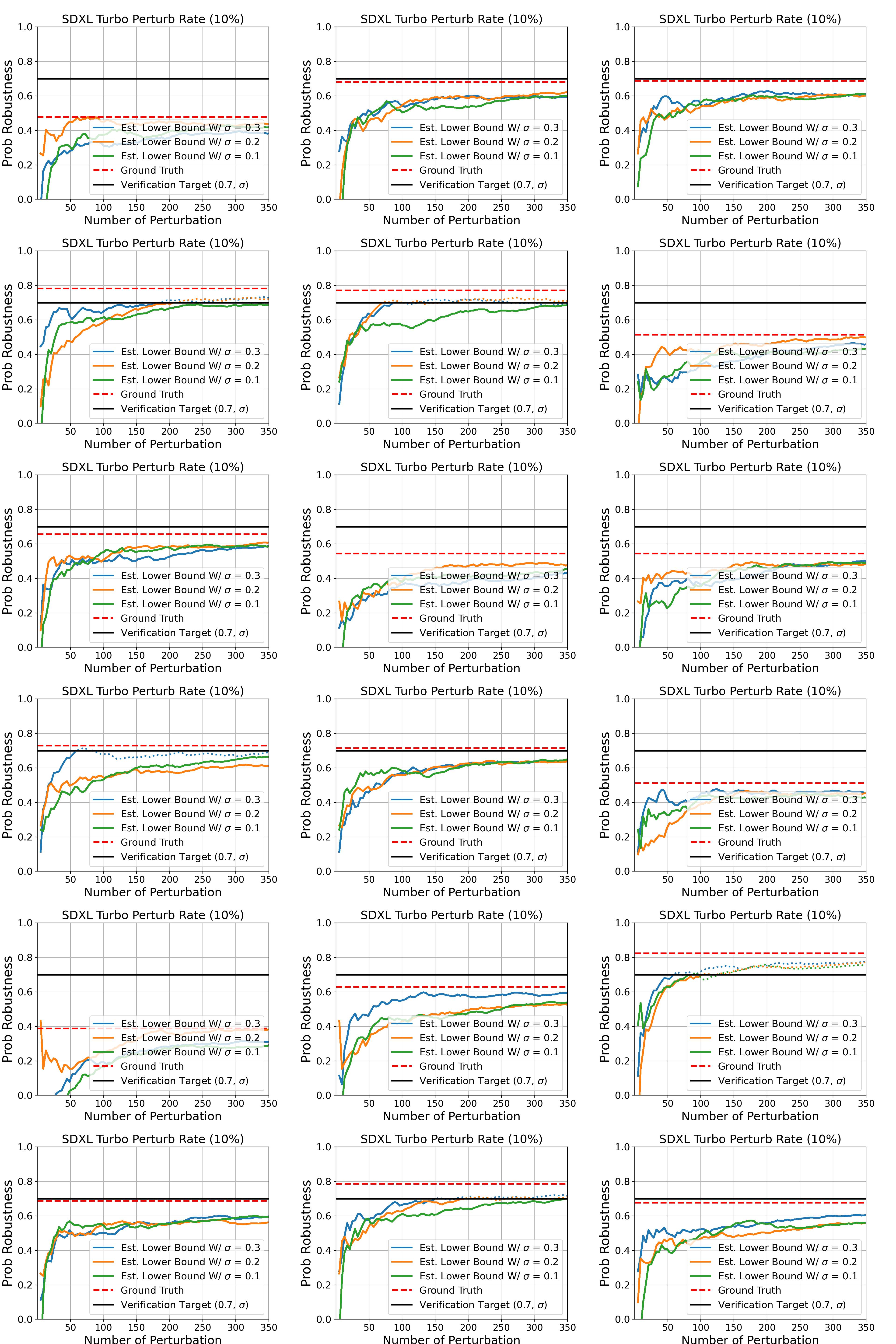}
    \caption{Results on ProTIP effectiveness for SDXL Turbo Pert.~Rate 10\% (Part 1).}
    \label{fig_SDXL}
\end{figure}

\begin{figure}[tb]
    \ContinuedFloat
    \centering
    \includegraphics[width=\textwidth]{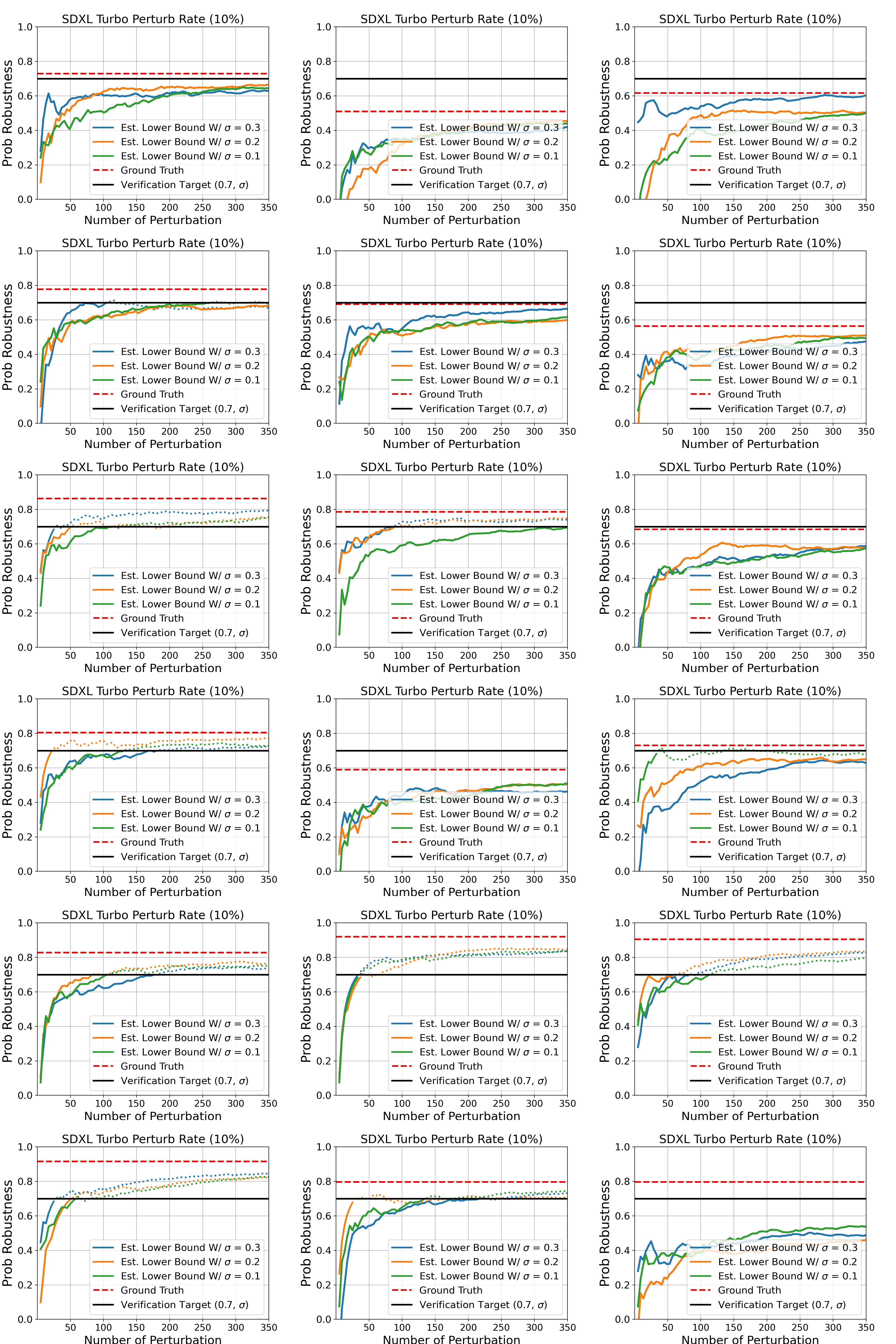}
    \caption{Results on ProTIP effectiveness for SDXL Turbo Pert.~Rate 10\% (Part 2).}
\end{figure}

\begin{figure}[tb]
    \centering
    \includegraphics[width=\textwidth]{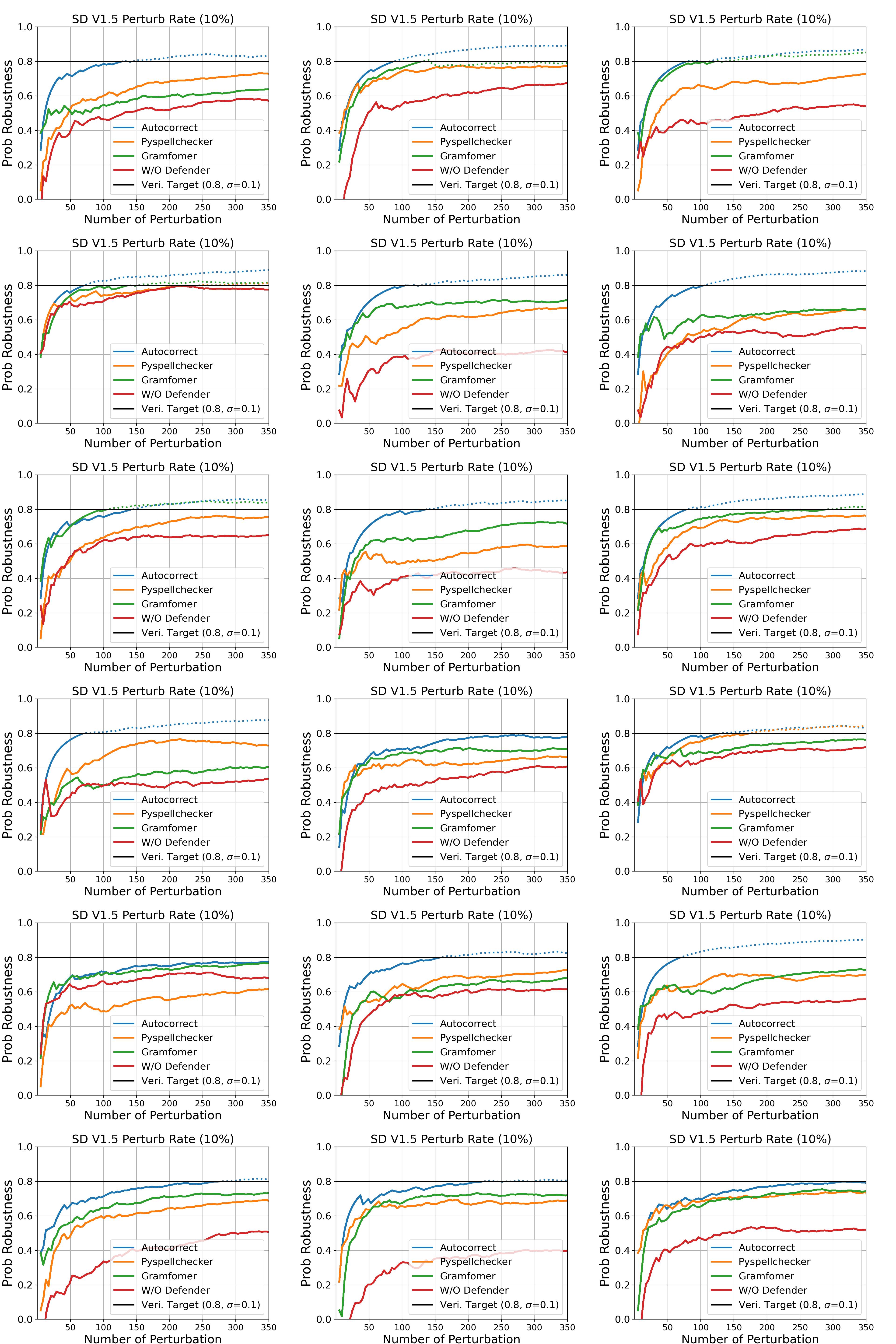}
    \caption{Probabilistic robustness with/without defence methods (Part 1).}
    %\label{fig_perturbation_defece}
\end{figure}

\begin{figure}[tb]
    \ContinuedFloat
    \centering
    \includegraphics[width=\textwidth]{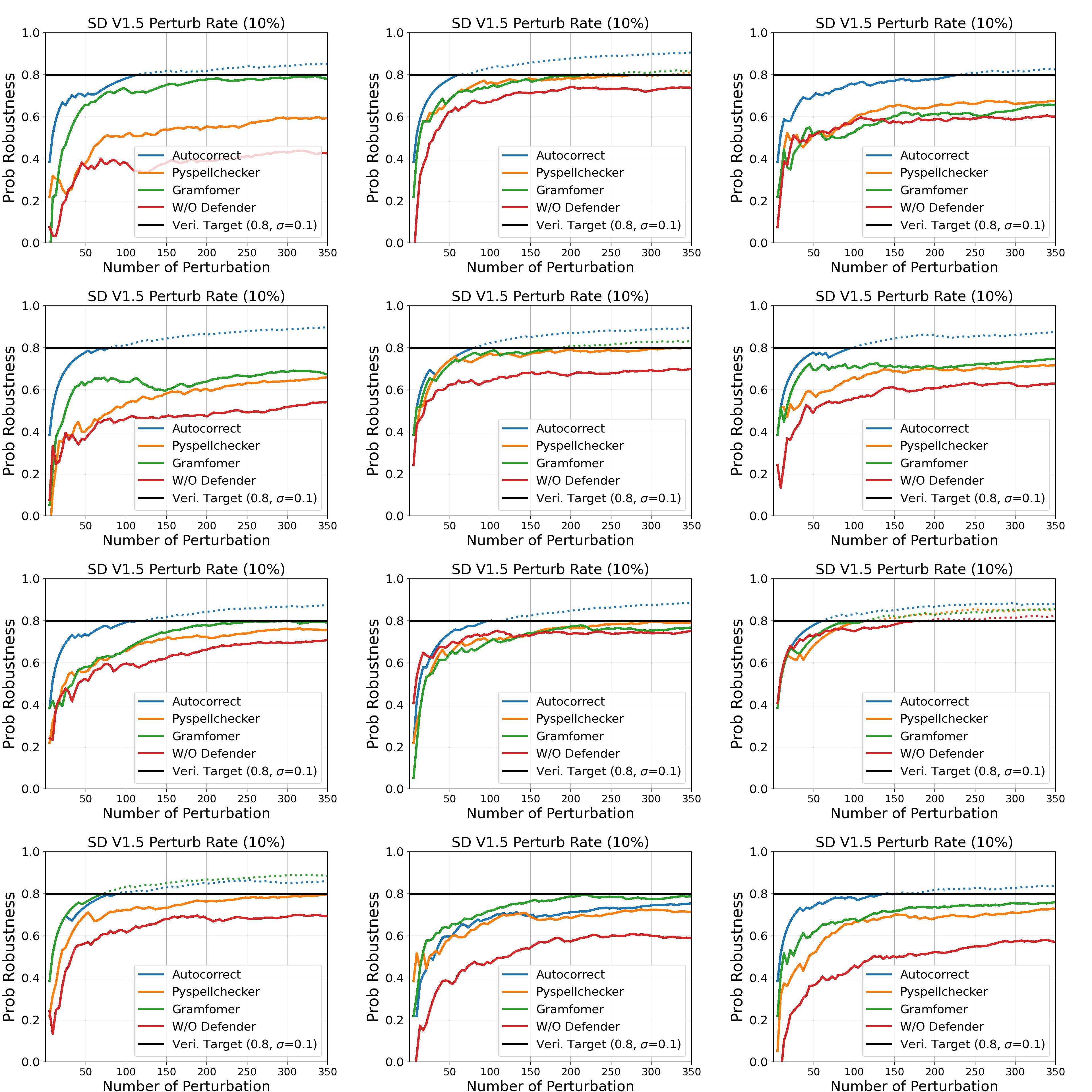}
    \caption{Probabilistic robustness with/without defence methods (Part 2).}
\end{figure}

\end{document}